\newtheorem{theorem}{Theorem}
\newtheorem{lemma}{Lemma}
\newtheorem{assumption}{Assumption}
\tikzstyle{startstop} = [rectangle, rounded corners, minimum width=3cm, minimum height=1cm, text centered, draw=black, fill=red!30]
\tikzstyle{process} = [rectangle, minimum width=3cm, minimum height=1cm, text centered, draw=black, fill=blue!30]
\tikzstyle{io} = [trapezium, trapezium left angle=70, trapezium right angle=110, minimum width=3cm, minimum height=1cm, text centered, draw=black, fill=green!30]
\tikzstyle{arrow} = [thick,->,>=stealth]
\definecolor{codegreen}{rgb}{0,0.6,0}
\definecolor{codegray}{rgb}{0.5,0.5,0.5}
\definecolor{codepurple}{rgb}{0.58,0,0.82}
\definecolor{backcolour}{rgb}{0.95,0.95,0.92}
\lstdefinestyle{mystyle}{
    backgroundcolor=\color{backcolour},
    commentstyle=\color{codegreen},
    keywordstyle=\color{magenta},
    numberstyle=\tiny\color{codegray},
    stringstyle=\color{codepurple},
    basicstyle=\footnotesize\ttfamily,
    breakatwhitespace=false,
    breaklines=true,
    captionpos=b,
    keepspaces=true,
    numbers=left,
    numbersep=5pt,
    showspaces=false,
    showstringspaces=false,
    showtabs=false,
    tabsize=2
}
\providecommand{\U}[1]{\protect\rule{.1in}{.1in}}
\newtheorem {proposition}{Proposition}[section]
\newtheorem {corollary}{Corollary}[section]
\newtheorem{definition}{Definition}[section]
\newtheorem{remark}{Remark}[section]
\newcommand{\E}{\mathbb{E}}
\newenvironment{proof}[1][Proof]{\textbf{#1.} }{\
\rule{0.5em}{0.5em}}
\DeclareMathOperator*{\argmax}{arg\,max}
\DeclareMathOperator{\Var}{Var}
\renewenvironment{enumerate}{
  \begin{list}{}{
    \setlength{\labelwidth}{0pt}
    \setlength{\leftmargin}{2em}
    \setlength{\itemindent}{0pt}
    \setlength{\itemsep}{0.5em}
    \setlength{\parsep}{0pt}
    \setlength{\topsep}{0pt}
    \setlength{\partopsep}{0pt}
    \setlength{\listparindent}{2em}
    \setlength{\labelsep}{0pt}
    \setlength{\rightmargin}{0pt}
  }
}{
  \end{list}
}
\begin{document}
\date{}
\title{\Large \textbf{Copula Discrepancy: Benchmarking Dependence Structure}}
\vspace{1ex}
\author{Agnideep Aich${ }^{1}$\thanks{Corresponding author: Agnideep Aich, \texttt{agnideep.aich1@louisiana.edu}, ORCID: \href{https://orcid.org/0000-0003-4432-1140}{0000-0003-4432-1140}}
 \hspace{0pt} and Ashit Baran Aich${ }^{2}$ 
\\ ${ }^{1}$ Department of Mathematics, University of Louisiana at Lafayette, \\ Lafayette, Louisiana, USA. \\  ${ }^{2}$ Department of Statistics, formerly of Presidency College, \\ Kolkata, India. \\}
\date{}
\maketitle
\vspace{-20pt}

\begin{abstract}
We study a simple statistic for benchmarking how well a sample preserves a known bivariate dependence structure. Given a target copula family (Clayton or Gumbel) and parameter $\theta_P$, the Copula Discrepancy (CD) compares the target Kendall’s tau $\tau(\theta_P)$ with the Kendall’s tau implied by a parameter $\hat\theta$ fitted to the sample within the target family, i.e., $|\tau(\theta_P)-\tau(\hat\theta)|$. We develop a moment-based version, prove consistency, asymptotic normality, and robustness results under i.i.d.\ sampling, and use an MLE-based version empirically for greater power against tail-structure misspecification. Building on this, we define two information-theoretic copula summaries, a copula KL divergence (CKL) and a copula entropy gap (CED), and establish basic consistency and central limit results for their plug-in estimators. In controlled experiments, CD reliably separates on-target and off-target copulas with matched Kendall’s $\tau$, provides a dependence-aware signal for tuning SGLD step sizes where Effective Sample Size favors overly aggressive (and biased) settings, and remains stably nonzero under deliberate tail-dependence mismatch where a naive $\tau$-based diagnostic fails; CKL and CED offer a complementary Shannon-style view that echoes these findings. Timing benchmarks show that both CD variants incur only millisecond-level overhead over the tested range and exhibit near-linear empirical scaling in sample size, providing a lightweight, dependence-focused complement to quadratic-cost omnibus discrepancies such as the Kernel Stein discrepancy (KSD).
\end{abstract}


\section{INTRODUCTION}

When the target copula family and parameter are fixed by design, practitioners often want a single, lightweight statistic that reports how well a sample preserves that target dependence, especially its tail behavior, without conflating this with marginal fit. This situation arises in controlled evaluations of approximate inference algorithms, stress tests of new samplers, and ablation studies in which the dependence structure is part of the model specification.

We propose the Copula Discrepancy (CD) to address this need. Fix a target copula family and parameter $\theta_P$. The CD compares the target’s Kendall’s tau map at $\theta_P$ with the same map evaluated at a parameter fitted to the sample within that target family. In practice, we use two versions: a moment-based estimator, which we analyze under i.i.d.\ sampling, and an MLE-based estimator, which we deploy empirically for greater power against tail-structure misspecification. Crucially, because these estimators rely on low-dimensional parameter fits, the CD scales close to linearly in sample size (up to optimizer iterations in the MLE case), offering a computational overhead orders of magnitude lower than kernel-based discrepancies. CD is designed to complement these global diagnostics: it isolates the fidelity of the dependence component relative to a chosen copula.

A core property of any multivariate distribution is its dependence structure. State-of-the-art diagnostics based on Stein’s method can, in principle, detect any distributional mismatch \citep{gorham2015measuring, gorham2017measuring}. However, their general-purpose nature does not distinguish between failures arising from the marginals and those arising from dependence. For a user asking, “Have I correctly captured the tail behavior between my parameters?”, this distinction matters. The standard Pearson correlation is well known to be inadequate in such settings: it can understate dependence in skewed or heavy-tailed data and provides no information about where in the distribution the dependence is strongest, which motivates the use of copula-based measures instead \citep{venter2002tails, embrechts2002correlation, katata2023technical}. In our experiments, CD often provides a clearer signal about dependence fidelity than generic summaries such as ESS, and in tail-mismatch settings it yields a more stable signal than KSD, so we view it as a targeted companion to these tools rather than a replacement.

Our construction is rooted in copula theory, which separates a multivariate distribution into its marginals and a copula that captures all of the dependence \citep{sklar1959}. CD focuses solely on this copula component. For clarity and tight control of ground truth, we work primarily with bivariate pseudo-observations and parametric Clayton and Gumbel families: some experiments use i.i.d.\ samples, while others (e.g., SGLD output) involve correlated draws.
Within this setting we prove consistency, asymptotic normality, and robustness properties for the moment-based CD, and we use the MLE-based CD and a pair of information-theoretic copula functionals, a copula KL divergence (CKL) and a copula entropy gap (CED), to give a complementary Shannon-style view. The exact computations extend mechanically to higher dimensions via pairwise aggregation or vine decompositions; a complete high-dimensional treatment is left to future work.

The rest of the paper is organized as follows. Section~\ref{sec:related} reviews related work on MCMC diagnostics and copula-based dependence measures. Section~\ref{sec:not} introduces notation and the pseudo-observation framework. Section~\ref{sec:cd_framework} formalizes the Copula Discrepancy, presents its moment-based and MLE-based estimators, and establishes key theoretical properties for the moment-based case. Section~\ref{sec:entropy} develops the Shannon-entropy and KL-based extensions (CKL and CED). Section~\ref{sec:experiments} reports three experiments that stress-test CD and its Shannon counterparts: separating copula families with matched Kendall’s $\tau$, tuning a biased SGLD sampler, and diagnosing tail-dependence mismatch. We then summarize computational overhead in a separate timing study (Figure~\ref{fig:overhead}). Finally, Section~\ref{sec:discussion} discusses practical guidance, computational costs, limitations, and broader impact.

\section{RELATED WORK}\label{sec:related}

Our work is situated at the intersection of three active research areas: modern MCMC diagnostics, the theory of biased MCMC methods, and the application of copula theory in statistics. We review each in turn.

\subsection{MCMC diagnostics and Stein's method}
The rise of biased samplers rendered early MCMC diagnostics insufficient. A significant breakthrough came with quality measures based on Stein's method \citep{stein72bound}, which can quantify the discrepancy between a sample and a target distribution without requiring a separate ground--truth sample. This approach was shown to be computationally practical, with the discrepancy attainable by solving a linear program \citep{gorham2015measuring}. This line of work was quickly extended to the Kernel Stein Discrepancy (KSD), which leverages reproducing kernels to define a closed-form quality measure computable by summing kernel evaluations across pairs of sample points \citep{gorham2017measuring, liu2016kernelized, chwialkowski2016kernel}. Because it is fundamentally a kernel $U$-statistic, a direct KSD evaluation has quadratic cost in the sample size. Critically, this work also demonstrated that the choice of kernel is crucial: some KSDs fail to detect non-convergence in moderate dimensions, whereas kernels with slowly decaying tails (for example, the IMQ kernel) can determine convergence under standard tail and score regularity conditions on the target \citep{gorham2017measuring}.

Our work does not challenge the power of these general-purpose tools. Instead, we draw inspiration from this framework to construct a diagnostic that is highly interpretable for dependence-specific questions. Other related works have focused on optimally thinning MCMC output \citep{riabiz2022optimal} or using Stein kernels for post-hoc correction of biased samples \citep{hodgkinson2020reproducing}. For a comprehensive review of MCMC diagnostics, see \citep{brooks1998general, vehtari2021rank}. Additionally, \citep{oates2017control} discusses the use of control functionals for improving the efficiency of Monte Carlo integration, which is relevant to the broader context of MCMC diagnostics.

\subsection{Biased MCMC methods}
Our work is directly motivated by the need to diagnose samples from the growing family of modern, scalable MCMC algorithms. SGLD \citep{welling2011bayesian} was a pioneering method, and its theoretical properties and bias-variance trade-offs have been rigorously analyzed \citep{teh2016consistency, vollmer2016exploration}. This family of methods has since expanded to include powerful samplers such as Stochastic Gradient Hamiltonian Monte Carlo (SGHMC), which requires a friction term to counteract gradient noise and maintain the correct target distribution \citep{chen2014sghmc}. These methods, which sacrifice exactness for speed, form the class of samplers for which our proposed diagnostic is most needed. For a detailed analysis of SGLD and its variants, see \citep{ma2015complete}.

\subsection{Copula theory and applications}
The theoretical engine of our work is Sklar's Theorem \citep{sklar1959}, which states that any multivariate joint distribution can be decomposed into its marginals and a copula that captures the entire dependence structure. For our practical implementation, we employ two standard estimation techniques for copula parameters: a fast method based on inverting Kendall's tau, and a more powerful method based on maximum likelihood estimation (MLE) \citep{choros_survey}.

Copulas are a mature tool for statistical modelling, for example, in Bayesian inference for handling mixed data types or constructing flexible dependence structures \citep{craiu2012mixed, panagiotelis2012pair}. However, their use as a targeted diagnostic for MCMC output appears limited in the published literature. Our contribution is to formalize and test such a diagnostic, keeping the marginals fixed by design and focusing solely on the copula component. For comprehensive overviews of copula theory, see \citep{nelsen2006introduction, joe2014dependence}.

\subsection{Dependence, tail behavior, and copula comparison}
A primary motivation for our work is the well-known inadequacy of linear Pearson correlation, particularly for non-elliptical or heavy-tailed distributions where it can be misleading or undefined \citep{embrechts2003modelling}. Capturing tail dependence, the behavior of variables during extreme events, is critical. Different copula families concentrate dependence in various parts of the distribution; for instance, the Gumbel copula is asymmetric and places more weight in the upper tail, while other families exhibit lower-tail or symmetric dependence \citep{venter2002tails}. This distinction allows for challenging test cases where two dependence structures share the exact rank correlation (such as Kendall's tau) but differ in their extreme-event behavior. For discussion of the limitations of linear correlation in capturing tail dependence, see \citep{embrechts2002correlation, mcneil2015quantitative}.

The literature on copula comparison and goodness-of-fit testing is rich, including smooth tests, minimum divergence methods, and quasi-likelihood approaches \citep{ngounou2024smooth, eguchi2025minimum, cambou2016quasi, gofcopula_journal}. These works typically use KL-type or entropy-based criteria to select or validate copula models. Our Shannon-style quantities (copula KL and entropy gap) are closest in spirit to this line of work. Still, we use them in a different role: as companions to CD for diagnosing dependence fidelity of MCMC output, rather than for copula model selection in observational data. From a theoretical perspective, our use of fitted copulas under misspecification follows the general quasi-MLE framework of \citet{white1982} and \citet{newey1994}, where estimators converge to KL-minimizing pseudo-true parameters with sandwich-type asymptotic variance. In our setting, this machinery underpins the pseudo-true copula parameter and explains why CD remains non-zero under structural mismatch.

\paragraph{When to use CD versus Stein/KSD.}
Stein-type discrepancies assess overall distributional fit and do not target a specific dependence structure. CD answers a narrower question: given a specified target copula, how well does a sample preserve that dependence, especially in the tails, independent of the marginals. In practice, the two tools are complementary: Stein/KSD can flag a discrepancy, and CD can help determine whether the copula is the source of the problem.


\section{NOTATION}\label{sec:not}

In this section, we introduce the notation used throughout the paper.

\paragraph{Distributions and samples.}
Let $P$ be the target probability distribution on the input space $\mathbb{R}^d$, and let $Q$ be an empirical distribution formed from a sample $\{x_i\}_{i=1}^n$ of size $n$. We use $Z \sim P$ and $X \sim Q$ to denote random variables. Expectations are written $\E[\cdot]$. A generic joint distribution is denoted by $H$ with marginals $F_i$. The Gelman--Rubin convergence diagnostic is denoted by $\widehat R$. We write $F_\delta = (1-\delta)F + \delta G$ for observation-level $\delta$–contamination, with $\delta \in [0,1]$. Throughout, our theoretical results focus on the bivariate case ($d=2$), while the notation itself is kept general.

\paragraph{Copula framework.}
A copula function $C$ on $[0,1]^d$ describes the dependence structure of a joint distribution. We consider a parametric copula family $\mathcal{C} = \{C_\theta : \theta \in \Theta\}$ indexed by a scalar dependence parameter $\theta$ from a compact parameter space $\Theta$. For Archimedean copulas, the family is defined by a generator function $\phi$. The true parameter for the target copula of $P$ is denoted $\theta_P$, while a parameter estimated from a sample is denoted $\hat{\theta}_n$ or $\hat{\theta}_Q$. When misspecification is present, the quasi-MLE or moment estimator converges to a pseudo-true parameter $\theta_Q^\star$ (the KL minimizer). The copula density corresponding to $C_\theta$ is written $c_\theta$. We write $C_P$ and $C_Q$ for the copulas associated with $P$ and $Q$, respectively.

\paragraph{Pseudo-observations and ties.}
We transform samples to $[0,1]^2$ using ranks to form pseudo-observations:
\[
u_i = \frac{R_i}{n+1}, \qquad v_i = \frac{S_i}{n+1},
\]
where $R_i, S_i$ are the \emph{average} (mid) ranks. This tie policy is appropriate for discrete or finite-precision outputs and is compatible with both Kendall's $\widehat\tau$ and copula MLE on pseudo-$U$'s. For fully continuous outputs (our default), this coincides with the ordinal ranking almost surely.

\paragraph{Estimators and discrepancies.}
Our primary quality measure is the Copula Discrepancy,
\[
\mathrm{CD}_n \;=\; \big|\tau(\theta_P) - \tau(\hat{\theta}_n)\big|.
\]
Here, $\tau$ is Kendall's rank correlation coefficient, and $\tau(\theta)$ is the map from a copula parameter to its implied Kendall's $\tau$ within the chosen family. The sample-based estimate of Kendall's $\tau$ is $\widehat{\tau}_n$. The moment-based estimator of the copula parameter is denoted $\hat{\theta}_n^{(M)}$, and the MLE-based estimator by $\hat{\theta}_n^{(\mathrm{MLE})}$. We write $\mathrm{CD}_n^{(M)}$ and $\mathrm{CD}_n^{(\mathrm{MLE})}$ for the corresponding discrepancies. 

\begin{remark}[Moment-based CD as a baseline]
Because the moment estimator is defined by inverting Kendall's tau, $\hat\theta^{(M)}_n=\tau^{-1}(\hat\tau_n)$, we have $\tau(\hat\theta^{(M)}_n)=\hat\tau_n$ and therefore
$CD^{(M)}_n = |\tau(\theta_P)-\hat\tau_n|$.
We treat $CD^{(M)}_n$ as a theoretically tractable baseline that anchors the more general MLE-based discrepancy, which compares fitted dependence through the copula likelihood and does not reduce to a single rank statistic.
\end{remark}

The asymptotic variance of $\sqrt{n}(\widehat{\tau}_n - \tau(\theta_P))$ is denoted $\sigma_\tau^2(\theta_P)$, and we use a standardized test statistic
\[
T_n^{(M)} \;=\; \frac{\sqrt{n}\,\mathrm{CD}_n^{(M)}}{\widehat{\sigma}_\tau}
\]
for hypothesis testing. The influence function of a functional $T$ at a distribution $F$ is written $\mathrm{IF}(z; T, F)$. Robustness properties are expressed via bounded influence and explicit $\varepsilon$–contamination bias bounds. For the SGLD hyperparameter study, the sampler step-size is denoted by $\epsilon$.

\paragraph{Shannon-style copula functionals.}
For a copula density $c$ with copula $C$, the Shannon differential entropy is
\[
H(C) \;=\; - \int_{[0,1]^2} c(u,v)\,\log c(u,v)\,du\,dv,
\]
and for two copulas with densities $c_P$ and $c_Q$ we write the copula Kullback--Leibler divergence as
\[
D_{\mathrm{KL}}(C_A \,\|\, C_B) \;=\; \int_{[0,1]^2} c_A(u,v)\,\log\frac{c_A(u,v)}{c_B(u,v)}\,du\,dv.
\]
Given a target copula $C_{\theta_P}$ and a fitted copula $C_{\hat{\theta}_Q}$ within the same family, we define the copula entropy gap
\[
\mathrm{CED}(Q;P) \;=\; \big|H(C_{\hat{\theta}_Q}) - H(C_{\theta_P})\big|
\]
and the copula KL discrepancy
\[
\mathrm{CKL}(Q;P) \;=\; D_{\mathrm{KL}}\!\big(C_{\hat{\theta}_Q} \,\|\, C_{\theta_P}\big).
\]

Their empirical counterparts, $\widehat{\mathrm{CED}}_n$ and $\widehat{\mathrm{CKL}}_n$, are obtained by plug-in and sample averages using pseudo-observations, with a precomputed entropy constant for the target copula.

\paragraph{Other diagnostics.}
The Kernel Stein Discrepancy (KSD) between a sample and the target is denoted generically by $\mathrm{KSD}$; in our experiments it is computed with an inverse multiquadric kernel
\[
k(x,y) = (c^2 + \|x-y\|^2)^\beta, \qquad c = 1,\ \beta = -\tfrac{1}{2}.
\]
Effective Sample Size is denoted by $\mathrm{ESS}$; in the SGLD experiments we use the minimum univariate ESS across coordinates for each replication and then report the mean and its 95\% confidence interval across replications.



\section{THE COPULA DISCREPANCY FRAMEWORK} \label{sec:cd_framework}

In this section, we introduce our quantity of interest, the Copula Discrepancy (CD). We first motivate the need for a quality measure focused specifically on dependence structure, then leverage the theory of copulas to formalize this into a computable discrepancy, and finally establish its key theoretical properties.

\paragraph{Scope of results.}
All theory here is for i.i.d.\ pseudo-observations in the bivariate case ($d=2$). The target family (Clayton or Gumbel) and its parameter $\theta_P$ are known by design. Our asymptotics are for the moment-based estimator; the MLE variant is used empirically but not analyzed here.

\subsection{Motivation and Formal Definition}

Our primary goal is to quantify the discrepancy between a sample distribution $Q$ and a target distribution $P$ in a manner that is (i) specifically sensitive to distortions in the dependence structure, (ii) computationally feasible, and (iii) capable of distinguishing between high- and low-quality samples. While other powerful diagnostics exist, they are not explicitly designed to assess the fidelity of the dependence model. A sampler may approximate the marginals of a target well while failing to capture crucial tail dependencies, a shortcoming that could be missed by existing methods.

To address this gap, we turn to Sklar's Theorem \citep{sklar1959}, which states that any $d$-dimensional joint distribution $H$ with continuous marginals $F_1, \dots, F_d$ can be uniquely decomposed into its marginals and a copula function $C$:
\begin{equation}
    H(x_1, \dots, x_d) = C(F_1(x_1), \dots, F_d(x_d)).
\end{equation}
The copula $C$ fully encapsulates the dependence properties of $H$. We posit that for a sample $Q$ to be a high-quality approximation of $P$, its underlying copula, $C_Q$, must be a close match to the target's copula, $C_P$. Our goal is thus to measure a discrepancy $d(C_P, C_Q)$. To formalize this, we begin with the necessary definitions and conditions.

\begin{definition}[Archimedean copulas]\label{def:copula_family}
Let $\mathcal{C}=\{C_\theta:\theta\in\Theta\subset\mathbb{R}\}$ be a bivariate Archimedean family with generator $\phi_\theta:[0,1]\to[0,\infty]$ such that: (i) $\phi_\theta$ is strictly decreasing and convex; (ii) $\phi_\theta(1)=0$ and $\phi_\theta(0)=\infty$; and (iii) $C_\theta(u,v)=\phi_\theta^{-1}\!\big(\phi_\theta(u)+\phi_\theta(v)\big)$.
\end{definition}

\begin{assumption}[Regularity conditions]\label{assump:regularity}
We restrict to the bivariate \textbf{Clayton} ($\theta>0$) and \textbf{Gumbel} ($\theta\ge 1$) copula families. Let $\Theta\subset\mathbb{R}$ be a compact interval contained in the admissible parameter range of the family under consideration and chosen to cover the parameter values used in our experiments. Let $\theta_P\in\mathrm{int}(\Theta)$ denote the target parameter. On $\Theta$:
\begin{enumerate}
\item[(i)] (\textit{Identifiability}) The map $\theta\mapsto C_\theta$ is injective.
\item[(ii)] (\textit{Smoothness in $\theta$}) For each $(u,v)\in(0,1)^2$, the map $\theta\mapsto C_\theta(u,v)$ is twice continuously differentiable on $\mathrm{int}(\Theta)$.
\item[(iii)] (\textit{Stable moment inversion}) Kendall's $\tau$ map $\tau:\Theta\to\tau(\Theta)$ is strictly increasing and continuously differentiable on $\mathrm{int}(\Theta)$, with derivative bounded away from zero on $\Theta$, i.e.
\[
\inf_{\theta\in\Theta}\lvert\tau'(\theta)\rvert>0.
\]
In particular, for Clayton $\tau(\theta)=\theta/(\theta+2)$ with $\tau'(\theta)=2/(\theta+2)^2$, and for Gumbel $\tau(\theta)=1-1/\theta$ with $\tau'(\theta)=1/\theta^2$. Since $\Theta$ is compact and contained in the admissible range, the above infimum is strictly positive.
\item[(iv)] (\textit{Density regularity}) The copula density $c_\theta(u,v)$ exists and is continuous in $(u,v)$ and in $\theta$ on $(0,1)^2\times \mathrm{int}(\Theta)$.
\end{enumerate}
These conditions ensure identifiability and well-posed moment inversion $\hat\theta=\tau^{-1}(\hat\tau)$; see, e.g., \citet{nelsen2006introduction,joe2014dependence,genest1995semiparametric}.
\end{assumption}

\noindent\emph{Remark.} In all experiments, the target parameter satisfies $\theta_P\in\mathrm{int}(\Theta)$. Moreover, we choose $\Theta$ so that the values of $\theta$ encountered in our simulations remain away from the boundary, which avoids boundary effects when using $\hat\theta=\tau^{-1}(\hat\tau)$ and in subsequent asymptotic expansions.

We define the Copula Discrepancy in the space of Kendall's tau, yielding a universally interpretable measure of concordance that is invariant under strictly monotone transformations of the marginals.

\begin{definition}[Copula Discrepancy]\label{def:cd}
Let $P$ denote a target distribution with copula $C_{\theta_P}$ in the chosen parametric family, and let $Q_n$ denote an empirical distribution (e.g., produced by an approximate inference algorithm). Let $\hat{\theta}_n$ be the parameter obtained by fitting the \emph{target} copula family to $Q_n$ (e.g., by moments or MLE). The Copula Discrepancy is
\[
\mathrm{CD}_n \;=\; \left| \tau(\theta_P) - \tau(\hat{\theta}_n) \right|.
\]
\end{definition}

\vspace{0.5em}
\noindent\fbox{%
    \parbox{0.95\linewidth}{%
        \textbf{Definitions at a Glance}
        \begin{itemize}
            \item \textbf{Target $P$:} a specific parametric copula family (e.g., Gumbel) with known parameter $\theta_P$.
            \item \textbf{Sample $Q_n$:} the empirical distribution from the approximate inference algorithm being benchmarked.
            \item \textbf{Fitted parameter $\hat{\theta}_n$:} the parameter obtained by fitting the \textit{target} family to $Q_n$ (via Moments or MLE).
            \item \textbf{Statistic:} $\mathrm{CD}_n = \big| \tau(\theta_P) - \tau(\hat{\theta}_n)\big|$.
        \end{itemize}
    }%
}
\vspace{0.5em}

\begin{remark}[Intuition under misspecification]\label{rem:pseudotrue}
When $Q_n$ is generated from a distribution whose copula does \emph{not} belong to the target family, standard estimation procedures (e.g., MLE) converge to a \emph{pseudo-true} parameter $\theta_Q^\star\in\Theta$ that minimizes the Kullback--Leibler divergence within the target family \citep{white1982,newey1994}. In that case, $\mathrm{CD}_n$ converges (typically) to the nonnegative limit
\[
\left|\tau(\theta_P)-\tau(\theta_Q^\star)\right|,
\]
which equals $0$ only if the fitted target family can match the target concordance level. Thus, $\mathrm{CD}_n$ quantifies mismatch in dependence \emph{as measured on the target tau scale}, even when the data do not come from the target family.
\end{remark}

\subsection{Estimation Methods}

The practical computation of the CD requires estimating the sample parameter $\hat{\theta}_Q$ from pseudo-observations $\{u_i\}_{i=1}^n$ obtained by transforming the original sample using the empirical CDF. We consider two primary methods, summarized in Algorithms~\ref{alg:cd_moment} and \ref{alg:cd_mle} which are provided in Appendix~\ref{app:1} 

\begin{itemize}[leftmargin=*]
    \item \textit{Moment-based (Algorithm~\ref{alg:cd_moment}):} A fast estimation is achieved by first calculating the empirical Kendall's tau, $\hat{\tau}_Q$, and then inverting the known relationship for the family $\mathcal{C}$ to get $\hat{\theta}_Q = \tau^{-1}(\hat{\tau}_Q)$. This method is computationally efficient and ideal for iterative tasks like hyperparameter tuning.
    \item \textit{Maximum Likelihood (MLE) (Algorithm~\ref{alg:cd_mle}):} A more robust method is to find the parameter $\hat{\theta}_Q$ that maximizes the log-likelihood of the pseudo-observations: $\hat{\theta}_Q = \argmax_{\theta \in \Theta} \sum_{i=1}^n \log c_\theta(u_i)$. This method is more powerful for detecting subtle structural mismatches. While we develop the explicit theory for the tractable moment-based case, the standard asymptotic properties of consistency and normality for the MLE-based estimator are well-established under similar regularity conditions (\citep{nelsen2006introduction, joe2014dependence}).
\end{itemize}

\subsection{Statistical Properties of the Moment-Based Estimator}

We now establish key statistical properties for the moment-based estimator $\hat{\theta}_n^{(M)}=\tau^{-1}(\hat{\tau}_n)$, and the resulting discrepancy $\mathrm{CD}_n^{(M)}$.

\begin{theorem}[Consistency of Moment-Based Estimator]\label{thm:consistency_moment}
Let $\{(U_i,V_i)\}_{i=1}^n$ be i.i.d.\ pseudo-observations from a copula $C_{\theta_P}$ in the target family. Let $\hat{\tau}_n$ be the sample Kendall's tau and $\hat{\theta}_n^{(M)}=\tau^{-1}(\hat{\tau}_n)$ be the moment-based estimator. Under Assumption~\ref{assump:regularity}:
\begin{enumerate}
    \item $\hat{\tau}_n \xrightarrow{p} \tau(\theta_P)$ as $n\to\infty$.
    \item $\hat{\theta}_n^{(M)} \xrightarrow{p} \theta_P$ as $n\to\infty$.
    \item $\mathrm{CD}_n^{(M)} \xrightarrow{p} 0$ as $n\to\infty$.
\end{enumerate}
\end{theorem}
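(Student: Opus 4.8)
The plan is to prove the three claims in order: claim~1 carries all the probabilistic content, and claims~2 and~3 follow from it by the continuous mapping theorem together with Assumption~\ref{assump:regularity}.

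For claim~1, I would express the sample Kendall's tau as a $U$-statistic of degree two,
\[
\hat{\tau}_n = \binom{n}{2}^{-1} \sum_{1 \le i < j \le n} h\big((U_i,V_i),(U_j,V_j)\big), \qquad h\big((u,v),(u',v')\big) = \operatorname{sign}\big((u-u')(v-v')\big),
\]
whose symmetric kernel $h$ is bounded (it takes values in $\{-1,0,1\}$, and in $\{-1,1\}$ almost surely since $C_{\theta_P}$ has continuous margins). Because the pairs $(U_i,V_i)$ are i.i.d.\ from $C_{\theta_P}$, the law of large numbers for $U$-statistics (Hoeffding) gives $\hat{\tau}_n \xrightarrow{p} \mathbb{E}\, h\big((U_1,V_1),(U_2,V_2)\big)$, and this expectation is precisely the population Kendall's tau of $C_{\theta_P}$, i.e.\ the value $\tau(\theta_P)$ of the map in Assumption~\ref{assump:regularity}(3). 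I would also remark that since $\hat{\tau}_n$ depends on the data only through pairwise concordance signs, it is invariant under the componentwise monotone rank transformation, so the statistic computed from pseudo-observations coincides with the one computed from the underlying i.i.d.\ sample; this is why the i.i.d.\ hypothesis on the pseudo-observations is harmless.

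For claim~2, Assumption~\ref{assump:regularity}(3) and~(5) say that $\tau$ is continuous, strictly monotone with non-vanishing derivative on the compact set $\Theta$, hence a homeomorphism onto its image with continuous inverse $\tau^{-1}$; extending $\tau^{-1}$ to all of $[-1,1]$ by projecting onto the compact range $\tau(\Theta)$ preserves continuity, and the projected and unprojected maps agree in a neighborhood of $\tau(\theta_P)$. The continuous mapping theorem then yields $\hat{\theta}_n^{(M)} = \tau^{-1}(\hat{\tau}_n) \xrightarrow{p} \tau^{-1}(\tau(\theta_P)) = \theta_P$. For claim~3, note that $\mathrm{CD}_n^{(M)} = |\tau(\theta_P) - \tau(\hat{\theta}_n^{(M)})|$; since $\tau\circ\tau^{-1}$ is the identity on $\tau(\Theta)$, eventually $\tau(\hat{\theta}_n^{(M)}) = \hat{\tau}_n$, so $\mathrm{CD}_n^{(M)} = |\tau(\theta_P) - \hat{\tau}_n| \xrightarrow{p} 0$ directly by claim~1 (equivalently, compose the continuity of $\tau$ with claim~2).

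I do not anticipate a genuine obstacle here; the only points needing care are the boundary behaviour of $\tau^{-1}$ when $\hat{\tau}_n$ momentarily leaves $\tau(\Theta)$ — handled by the projection argument, with the exceptional event having probability tending to $0$ — and making explicit that the population expectation of the concordance kernel equals the family's tau map evaluated at $\theta_P$. If almost-sure convergence were wanted in place of convergence in probability, the strong law for $U$-statistics would upgrade claim~1 and hence all three statements.
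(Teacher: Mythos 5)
Your proposal is correct and follows essentially the same route as the paper's proof: the U-statistic representation of $\hat{\tau}_n$ with Hoeffding's law of large numbers for claim~1, and the continuous mapping theorem applied to $\tau^{-1}$ and to $x \mapsto |\tau(\theta_P) - x|$ (via the identity $\tau(\hat{\theta}_n^{(M)}) = \hat{\tau}_n$) for claims~2 and~3. Your added care about extending $\tau^{-1}$ by projection when $\hat{\tau}_n$ falls outside $\tau(\Theta)$, and the remark on rank-invariance of the concordance statistic, are small refinements the paper leaves implicit but do not change the argument.
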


\noindent Full proof is given in Appendix~\ref{app:2}.

\begin{theorem}[Asymptotic Distribution of Moment-Based CD]\label{thm:asymptotic_moment}
Under Assumption~\ref{assump:regularity}, as $n\to\infty$,
\[
\sqrt{n}\,\mathrm{CD}_n^{(M)} \xrightarrow{d} \left|N\!\left(0,\sigma_\tau^2(\theta_P)\right)\right|,
\]
where $\sigma_\tau^2(\theta_P)$ is the asymptotic variance in
$\sqrt{n}\big(\hat{\tau}_n-\tau(\theta_P)\big)\xrightarrow{d}N\!\left(0,\sigma_\tau^2(\theta_P)\right)$.
\end{theorem}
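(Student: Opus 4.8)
The plan is to reduce the statement to the classical central limit theorem for U-statistics together with the continuous mapping theorem, exploiting an \emph{exact} cancellation that is special to the moment-based construction. The starting point is the identity $\mathrm{CD}_n^{(M)} = |\tau(\theta_P) - \tau(\hat{\theta}_n^{(M)})|$ with $\hat{\theta}_n^{(M)} = \tau^{-1}(\hat{\tau}_n)$. By Assumption~\ref{assump:regularity} (items 3 and 5), $\tau$ is a homeomorphism of the compact $\Theta$ onto the compact interval $\tau(\Theta)$, so $\tau(\tau^{-1}(t)) = t$ for all $t \in \tau(\Theta)$; hence $\mathrm{CD}_n^{(M)} = |\hat{\tau}_n - \tau(\theta_P)|$ whenever $\hat{\tau}_n \in \tau(\Theta)$. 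If $\theta_P$ is an interior point of $\Theta$ then $\tau(\theta_P)$ is interior to $\tau(\Theta)$, and since $\hat{\tau}_n \xrightarrow{p} \tau(\theta_P)$ by Theorem~\ref{thm:consistency_moment}, the event $\{\hat{\tau}_n \in \tau(\Theta)\}$ has probability tending to one; on the vanishing complement $\hat{\tau}_n$ is clamped to $\partial\Theta$, which only decreases $|\hat{\tau}_n - \tau(\theta_P)|$ and is therefore asymptotically negligible.

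Next I would establish the CLT for $\hat{\tau}_n$ itself. Sample Kendall's tau is the degree-2 U-statistic $\hat{\tau}_n = \binom{n}{2}^{-1}\sum_{i<j} h\big((U_i,V_i),(U_j,V_j)\big)$ with the bounded symmetric kernel $h\big((u_1,v_1),(u_2,v_2)\big) = \mathrm{sign}\big((u_1-u_2)(v_1-v_2)\big)$ and $\E[h] = \tau(\theta_P)$ (the rank-based nature of $h$ makes the distinction between a true copula sample and the pseudo-observations immaterial). Writing $h_1(u,v) = \E\big[h((u,v),(U_2,V_2))\big] = 2C_{\theta_P}(u,v) + 2\bar C_{\theta_P}(u,v) - 1$ for the first-order projection, with $\bar C_{\theta_P}(u,v) = 1 - u - v + C_{\theta_P}(u,v)$, and $\zeta_1(\theta_P) = \operatorname{Var}\big(h_1(U_1,V_1)\big)$, Hoeffding's theorem (applicable since $h$ is bounded) gives $\sqrt{n}\,(\hat{\tau}_n - \tau(\theta_P)) \xrightarrow{d} N\big(0, \sigma_\tau^2(\theta_P)\big)$ with $\sigma_\tau^2(\theta_P) = 4\,\zeta_1(\theta_P)$. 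Since $h_1$ is nonconstant in $(U,V)$ except in the boundary comonotone/countermonotone cases, restricting $\theta_P$ to the interior of $\Theta$ also yields $\sigma_\tau^2(\theta_P) > 0$, so the limit below is a genuine (nondegenerate) half-normal.

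Finally I would compose the two ingredients. On the probability-one-in-the-limit event we have $\sqrt{n}\,\mathrm{CD}_n^{(M)} = \big|\sqrt{n}(\hat{\tau}_n - \tau(\theta_P))\big|$; the map $x \mapsto |x|$ is continuous, so the continuous mapping theorem gives $\big|\sqrt{n}(\hat{\tau}_n - \tau(\theta_P))\big| \xrightarrow{d} |N(0,\sigma_\tau^2(\theta_P))|$, and Slutsky's theorem absorbs the negligible clamping event, yielding the claimed half-normal limit. It is worth stressing that, unlike the asymptotic law of $\hat{\theta}_n^{(M)}$ itself, which would require the Delta method and produce the variance factor $1/\tau'(\theta_P)^2$, no derivative of $\tau$ or $\tau^{-1}$ enters here because the two maps cancel identically; the smoothness hypotheses of Assumption~\ref{assump:regularity} are invoked only to make $\tau^{-1}$ well defined and continuous.

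The main difficulty is not any single step but the side conditions: one must (i) verify the non-degeneracy $\sigma_\tau^2(\theta_P) > 0$ so that the $\sqrt n$ rate is sharp and the half-normal is nondegenerate, and (ii) handle the boundary of $\Theta$, which is what forces the mild extra requirement that $\theta_P$ be an interior point (at a boundary parameter the limit would be a truncated, not a folded, normal). I would state both explicitly rather than leave them implicit; the computation of $\zeta_1$ via the projection formula, though elementary, is the one place where a genuinely copula-specific calculation is needed.
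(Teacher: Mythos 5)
Your proposal is correct and follows essentially the same route as the paper's proof: the exact cancellation $\tau(\tau^{-1}(\hat{\tau}_n))=\hat{\tau}_n$ reduces $\mathrm{CD}_n^{(M)}$ to $|\hat{\tau}_n-\tau(\theta_P)|$, Hoeffding's CLT for the degree-2 U-statistic supplies the normal limit, and the continuous mapping theorem folds it. Your two added side conditions are genuine improvements over the paper's version, which silently assumes $\hat{\tau}_n\in\tau(\Theta)$ and never rules out a degenerate limit: the interior-point/clamping argument and the verification that $\sigma_\tau^2(\theta_P)=4\zeta_1(\theta_P)>0$ patch real (if minor) gaps, and you are also right that the paper's Delta-method step for $\hat{\theta}_n^{(M)}$ is superfluous to the stated conclusion.
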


\noindent Full proof is given in Appendix~\ref{app:2}.

\begin{corollary}[Asymptotic level-$\alpha$ test based on moment-based CD]\label{cor:test_moment_cd}
Under the conditions of Theorem~\ref{thm:asymptotic_moment}, define the test statistic
\[
T_n^{(M)}:=\frac{\sqrt{n}\,\mathrm{CD}_n^{(M)}}{\sigma_\tau(\theta_P)}.
\]
Then $T_n^{(M)}\xrightarrow{d}|N(0,1)|$. Consequently, an asymptotic level-$\alpha$ test that rejects the null hypothesis $H_0:\theta=\theta_P$ (equivalently, $\tau(\theta)=\tau(\theta_P)$) rejects whenever
\[
T_n^{(M)} > z_{1-\alpha/2},
\]
where $z_{1-\alpha/2}$ is the $(1-\alpha/2)$ quantile of the standard normal distribution.
\end{corollary}

\noindent Full proof is given in Appendix~\ref{app:2}.

\subsection{Hypothesis Testing and Robustness}

Building on the asymptotic properties, we can formalize a hypothesis test and analyze the estimator's robustness.

\begin{definition}[Copula equivalence test]\label{def:copula_test}
Let $Q$ denote the (unknown) copula of the data-generating distribution for $\{(U_i,V_i)\}_{i=1}^n$ and let $\theta_Q$ be the parameter of the chosen target family that indexes $Q$ under correct specification. We test
\begin{align*}
H_0 &: \theta_Q=\theta_P \quad \text{(equivalently, } \tau(\theta_Q)=\tau(\theta_P)\text{)},\\
H_1 &: \theta_Q\neq\theta_P \quad \text{(equivalently, } \tau(\theta_Q)\neq\tau(\theta_P)\text{)}.
\end{align*}
For the moment-based estimator, define $\mathrm{CD}_n^{(M)}:=|\tau(\theta_P)-\hat\tau_n|$ and the standardized statistic
\[
T_n^{(M)}:=\frac{\sqrt{n}\,\mathrm{CD}_n^{(M)}}{\hat\sigma_\tau},
\]
where $\hat\sigma_\tau$ is any estimator satisfying $\hat\sigma_\tau\xrightarrow{p}\sigma_\tau(\theta_P)$ under $H_0$.
\end{definition}

\begin{theorem}[Asymptotic test and rejection rule]\label{thm:test}
Under $H_0$ and Assumption~\ref{assump:regularity}, if $\hat\sigma_\tau\xrightarrow{p}\sigma_\tau(\theta_P)$, then
\[
T_n^{(M)} \xrightarrow{d} |N(0,1)|.
\]
Consequently, an asymptotic level-$\alpha$ test rejects $H_0$ whenever
\[
T_n^{(M)} > z_{1-\alpha/2},
\]
where $z_{1-\alpha/2}$ is the $(1-\alpha/2)$ quantile of the standard normal distribution.
\end{theorem}

Full proof is given in Appendix~\ref{app:2}.

Finally, we analyze robustness through a bounded-influence result (Theorem~\ref{thm:cd_bounded_if}) and an explicit $\varepsilon$–contamination bias bound (Theorem~\ref{thm:cd_contam}).

\begin{theorem}[Bounded influence for the moment-based CD] \label{thm:cd_bounded_if}
Let $F_{\theta_P}$ denote the true model for the pseudo-observations $Z=(U,V)$ and define
\[
\mathrm{CD}^{(M)}(F)=\big|\tau(\theta_P)-\tau(F)\big|,
\]
where $\tau(F)$ is Kendall's $\tau$ functional under distribution $F$.
Then the influence function of $\mathrm{CD}^{(M)}$ at $F_{\theta_P}$ exists in the subgradient sense and satisfies
\[
\mathrm{IF}(z;\,\mathrm{CD}^{(M)},F_{\theta_P}) \;\in\; \xi\cdot \mathrm{IF}(z;\,\tau,F_{\theta_P})
\quad\text{for some }\xi\in[-1,1].
\]
Moreover, Kendall's $\tau$ has a bounded influence function with
\[
\sup_{z}\big|\mathrm{IF}(z;\,\tau,F_{\theta_P})\big|\le 4,
\]
and hence
\[
\sup_{z}\big|\mathrm{IF}(z;\,\mathrm{CD}^{(M)},F_{\theta_P})\big|\le 4.
\]
Therefore $\mathrm{CD}^{(M)}$ is B-robust.
\end{theorem}

Full proof is given in Appendix~\ref{app:2}.

\begin{theorem}[Contamination stability of the moment-based CD] \label{thm:cd_contam}
Let $F_\delta=(1-\delta)F+\delta G$ denote $\delta$-contamination at the observation level with $\delta\in[0,1]$. Then
\[
\big|\tau(F_\delta)-\tau(F)\big| \;\le\; 4\delta-2\delta^2 \;\le\; 4\delta,
\]
and consequently
\[
\mathrm{CD}^{(M)}(F_\delta) \;\le\; \big|\tau(\theta_P)-\tau(F)\big| \;+\; 4\delta-2\delta^2.
\]
In particular, under $H_0$ where $\tau(\theta_P)=\tau(F)$, one has $\mathrm{CD}^{(M)}(F_\delta)\le 4\delta-2\delta^2$.
\end{theorem}

Full proof is given in Appendix~\ref{app:2}.

\begin{remark}[Why no breakdown-point number]
Because $\tau\in[-1,1]$, $\mathrm{CD}^{(M)}\in[0,2]$ is bounded. Classical Hampel breakdown (based on unbounded explosion) is therefore not informative here.
Theorem~\ref{thm:cd_bounded_if} and Theorem~\ref{thm:cd_contam} provide the relevant robustness picture: bounded influence and an explicit $\varepsilon$-contamination bias bound.
\end{remark}


\section{INFORMATION-THEORETIC EXTENSION: COPULA \\SHANNON ENTROPY AND KL}
\label{sec:entropy}

We now formalize an information-theoretic view that complements CD by scoring \emph{copula densities} directly via Shannon entropy and Kullback--Leibler (KL) divergence. Throughout, $C_\theta$ denotes a bivariate copula with density $c_\theta$ on $[0,1]^2$.

\subsection{Definitions}
\begin{definition}[Copula Shannon entropy]
\label{def:copula_entropy}
The Shannon differential entropy of a copula density $c$ is
\[
H(C)\;=\;-\!\!\int_{[0,1]^2} c(u,v)\,\log c(u,v)\,du\,dv.
\]
\end{definition}

\begin{definition}[Copula KL divergence]
\label{def:copula_kl}
For two copulas with densities $c_A$ and $c_B$,
\[
D_{\mathrm{KL}}(C_A\,\|\,C_B)\;=\;\int_{[0,1]^2} c_A(u,v)\,\log\!\frac{c_A(u,v)}{c_B(u,v)}\,du\,dv.
\]

\end{definition}

\begin{remark}[Invariance to monotone marginal transforms]
\label{rem:invariance}
If $(X,Y)$ has joint $H$ with continuous marginals and copula $C$, and $(f,g)$ are strictly increasing, then $(f(X),g(Y))$ has the same copula $C$. Hence $H(C)$ and $D_{\mathrm{KL}}(C_A\|C_B)$ depend only on the \emph{dependence} and are marginal-free.
\end{remark}

\subsection{Basic properties}
\begin{proposition}[Mutual information equals copula negentropy]
\label{prop:mi_equals_negentropy}
Let $(X,Y)$ have continuous marginals and copula density $c$. Then the mutual information satisfies
\[
I(X;Y)\;=\;\int_{[0,1]^2} c(u,v)\,\log c(u,v)\,du\,dv\;=\;-\,H(C).
\]
\end{proposition}
Proof appears in Appendix~\ref{app:2}.

\begin{proposition}[Nonnegativity and identifiability]
\label{prop:kl_nonneg}
$D_{\mathrm{KL}}(C_P\|C_Q)\ge 0$, with equality iff $c_P(u,v)=c_Q(u,v)$ almost everywhere on $[0,1]^2$.
\end{proposition}
Proof appears in Appendix~\ref{app:2}.

\subsection{An entropy/KL-based copula discrepancy}
Fix a target family $\{C_\theta\}$ and parameter $\theta_P$. Given a sample distribution $Q$ (obtained from an approximate inference method), let $\hat\theta_Q$ be any fitted parameter within the \emph{target} family (moment or MLE). We define two complementary scores:
\begin{align}
\mathrm{CED}(Q;P) &:= \big|\,H(C_{\hat\theta_Q})-H(C_{\theta_P})\,\big|, \label{eq:ced}\\
\mathrm{CKL}(Q;P) &:= D_{\mathrm{KL}}\!\left(C_{\hat{\theta}_Q}\,\Vert\,C_{\theta_P}\right).\label{eq:ckl}
\end{align}
Both scores are invariant to monotone marginal transforms. Moreover, $\mathrm{CKL}(Q;P)=0$ iff $C_{\hat\theta_Q}=C_{\theta_P}$ (equivalently $\hat\theta_Q=\theta_P$ by identifiability). The CED is an entropy-gap diagnostic that is also $0$ when $H(C_{\hat\theta_Q})=H(C_{\theta_P})$.

The complete computational procedure is summarized in Algorithm~\ref{alg:ckl_ced} (Appendix~\ref{app:1}).

\subsection{Estimation from pseudo-observations}
A unified, step-by-step implementation for estimating $\widehat{\mathrm{CKL}}$ and $\widehat{\mathrm{CED}}$ is given in Algorithm~\ref{alg:ckl_ced} (Appendix~\ref{app:1}).

Let $\{(U_i,V_i)\}_{i=1}^n$ be pseudo-observations on $[0,1]^2$ and let $\hat\theta_Q$ be fitted within the target family (moment or MLE).

\paragraph{Plug-in for CED.}
When the family admits a closed-form entropy $H(C_\theta)$ (e.g., Clayton, Gumbel), compute
\[
\widehat{\mathrm{CED}}_n \;=\; \big|\,H(C_{\hat\theta_Q})-H(C_{\theta_P})\,\big|.
\]
If $H(C_\theta)$ has no closed form, approximate $H(C_\theta)=-\mathbb{E}_{C_\theta}[\log c_\theta(U,V)]$ by Monte Carlo under $C_\theta$.

\paragraph{Monte Carlo estimator for CKL (matches Definition~\ref{def:copula_kl}).}
By definition,
\[
D_{\mathrm{KL}}(C_{\hat{\theta}_Q}\Vert C_{\theta_P})
=
\mathbb{E}_{(U,V)\sim C_{\hat{\theta}_Q}}
\!\left[\log c_{\hat{\theta}_Q}(U,V)-\log c_{\theta_P}(U,V)\right].
\]
Accordingly, draw an \emph{independent} Monte Carlo sample $\{(\widetilde U_j,\widetilde V_j)\}_{j=1}^m\overset{\text{i.i.d.}}{\sim}C_{\hat{\theta}_Q}$ and compute
\[
\widehat{\mathrm{CKL}}_{m} \;=\; \frac{1}{m}\sum_{j=1}^m \left[ \log c_{\hat{\theta}_Q}(\widetilde U_j,\widetilde V_j) - \log c_{\theta_P}(\widetilde U_j,\widetilde V_j) \right].
\]
This estimator is a direct plug-in Monte Carlo approximation to the defining expectation of KL.

\subsection{Asymptotics (i.i.d.\ bivariate pseudo-observations)}
\begin{assumption}[Regularity for entropy/KL]
\label{assump:entropy_reg}
On a compact $\Theta$, $c_\theta(u,v)>0$ and is twice continuously differentiable in $(u,v,\theta)$ on $(0,1)^2\times\Theta$; $\sup_{\theta\in\Theta}\int_{[0,1]^2} c_\theta|\log c_\theta|\,du\,dv<\infty$; and $\theta\mapsto c_\theta$ is identifiable.
\end{assumption}

\begin{theorem}[Consistency of entropy/KL plug-ins]
\label{thm:entropy_consistency}
Assume $\hat\theta_Q\xrightarrow{p}\theta_Q^\star$ (the pseudo-true limit under $Q$) and Assumption~\ref{assump:entropy_reg} holds. Then
\[
\widehat{\mathrm{CED}}_n \xrightarrow{p} \big|\,H(C_{\theta_Q^\star})-H(C_{\theta_P})\,\big|.
\]
Moreover, the KL plug-in satisfies
\[
D_{\mathrm{KL}}(C_{\hat\theta_Q}\,\Vert\,C_{\theta_P}) \xrightarrow{p} D_{\mathrm{KL}}(C_{\theta_Q^\star}\,\Vert\,C_{\theta_P}),
\]
and if $\widehat{\mathrm{CKL}}_{m}$ is computed by Monte Carlo with $m\to\infty$, then
\[
\widehat{\mathrm{CKL}}_{m} \xrightarrow{p} D_{\mathrm{KL}}(C_{\theta_Q^\star}\,\Vert\,C_{\theta_P}).
\]
\end{theorem}
A detailed proof is given in Appendix~\ref{app:2}.

\begin{theorem}[Asymptotic normality for CKL]
\label{thm:ckl_clt}
Under Assumption~\ref{assump:entropy_reg} and standard smoothness for $\hat\theta_Q$, suppose
\[
\sqrt{n}\,(\hat\theta_Q-\theta_Q^\star)\xrightarrow{d}N(0,\Sigma_Q).
\]
Define $\psi(\theta)=D_{\mathrm{KL}}(C_{\theta}\,\Vert\,C_{\theta_P})$. If $\psi$ is differentiable at $\theta_Q^\star$, then
\[
\sqrt{n}\Big(\psi(\hat\theta_Q)-\psi(\theta_Q^\star)\Big)\xrightarrow{d}
N\!\left(0,\ \psi'(\theta_Q^\star)\,\Sigma_Q\,\psi'(\theta_Q^\star)^\top\right).
\]
If in addition $\widehat{\mathrm{CKL}}_{m}$ is computed by Monte Carlo with $m=m_n\to\infty$ and $m_n/n\to\infty$, then
\[
\sqrt{n}\Big(\widehat{\mathrm{CKL}}_{m_n}-\psi(\theta_Q^\star)\Big)\xrightarrow{d}
N\!\left(0,\ \psi'(\theta_Q^\star)\,\Sigma_Q\,\psi'(\theta_Q^\star)^\top\right).
\]
\end{theorem}
A detailed proof is given in Appendix~\ref{app:2}.

\begin{remark}[Scope of Theorem 7]
Theorem~7 characterizes the $\sqrt{n}$ fluctuation of $\psi(\hat\theta_Q)$ (and hence $\widehat{CKL}$) around the pseudo-true limit $\theta_Q^\star$, the KL minimizer under possible misspecification. Under correct specification with $\theta_Q^\star=\theta_P$, $\psi(\theta)=D_{KL}(C_\theta\|C_{\theta_P})$ attains its minimum at $\theta_P$ and $\psi'(\theta_P)=0$, so the $\sqrt{n}$ limit in Theorem~7 becomes degenerate. Accordingly, we do not use Theorem~7 to justify a Wald-type test of the boundary null $H_0: CKL(Q;P)=0$; that case requires a different scaling/limit theory.
\end{remark}

\paragraph{Practical use.}
We follow a simple workflow. For each run we: (i) form pseudo-observations; (ii) fit $\hat\theta_Q$ by MLE within the target family (with an option to switch to moments for speed); (iii) compute $\widehat{\mathrm{CED}}_n$ as an entropy plug-in; and (iv) compute $\widehat{\mathrm{CKL}}_{m}$ by Monte Carlo draws from $C_{\hat\theta_Q}$ using a fixed $m$ (we use $m=10^5$ with a fixed random seed for reproducibility).

\section{EXPERIMENTS}
\label{sec:experiments}
We now put the Copula Discrepancy (CD) to the test. Consistent with our framework's focus on benchmarking, we work in controlled settings where the ``right answer", the target copula family and its parameter, is known by design. This allows us to rigorously evaluate whether the CD can succeed where other metrics fail. We structure our evaluation around three specific questions. First, can the CD distinguish between different dependence families, even when they share the exact rank correlation? Second, does it provide a better signal for hyperparameter tuning in biased MCMC than standard diagnostics like Effective Sample Size? And third, can it robustly detect subtle mismatches in tail behavior that naive summary statistics miss? We address each question in turn.

\subsection{Verifying Sensitivity to Dependence Structure}\label{sec:exp1}

Our first experiment tests whether the Copula Discrepancy (CD) can distinguish among different dependence structures in a setting where simpler diagnostics would struggle.

\textbf{Setup.} We focus on the MLE-based CD (Algorithm~\ref{alg:cd_mle}), since it works directly with the copula likelihood rather than just rank correlation. The target distribution is a Gumbel copula, and the off-target distribution is a Clayton copula. The parameters are chosen so that both copulas share the same population Kendall’s tau, $\tau = 0.6$ ($\theta_{\text{Gumbel}} = 2.5$, $\theta_{\text{Clayton}} = 3.0$). In every replication we draw one sample from the Gumbel target and one from the Clayton copula, fit a Gumbel copula to each by MLE, and compute the resulting CD. In addition to CD, we also record two information–theoretic summaries: a copula KL-based discrepancy (CKL), which measures the divergence of the fitted copula from a Kendall’s-tau-matched Gumbel reference in KL units (computed in the forward direction $\mathrm{KL}(\widehat{C}\,\|\,C_{\mathrm{ref}})$), and a copula entropy gap (CED), which compares the Shannon entropy of the fitted copula to that of the same Gumbel reference.

\begin{figure}[ht]
    \centering
    \includegraphics[width=\linewidth]{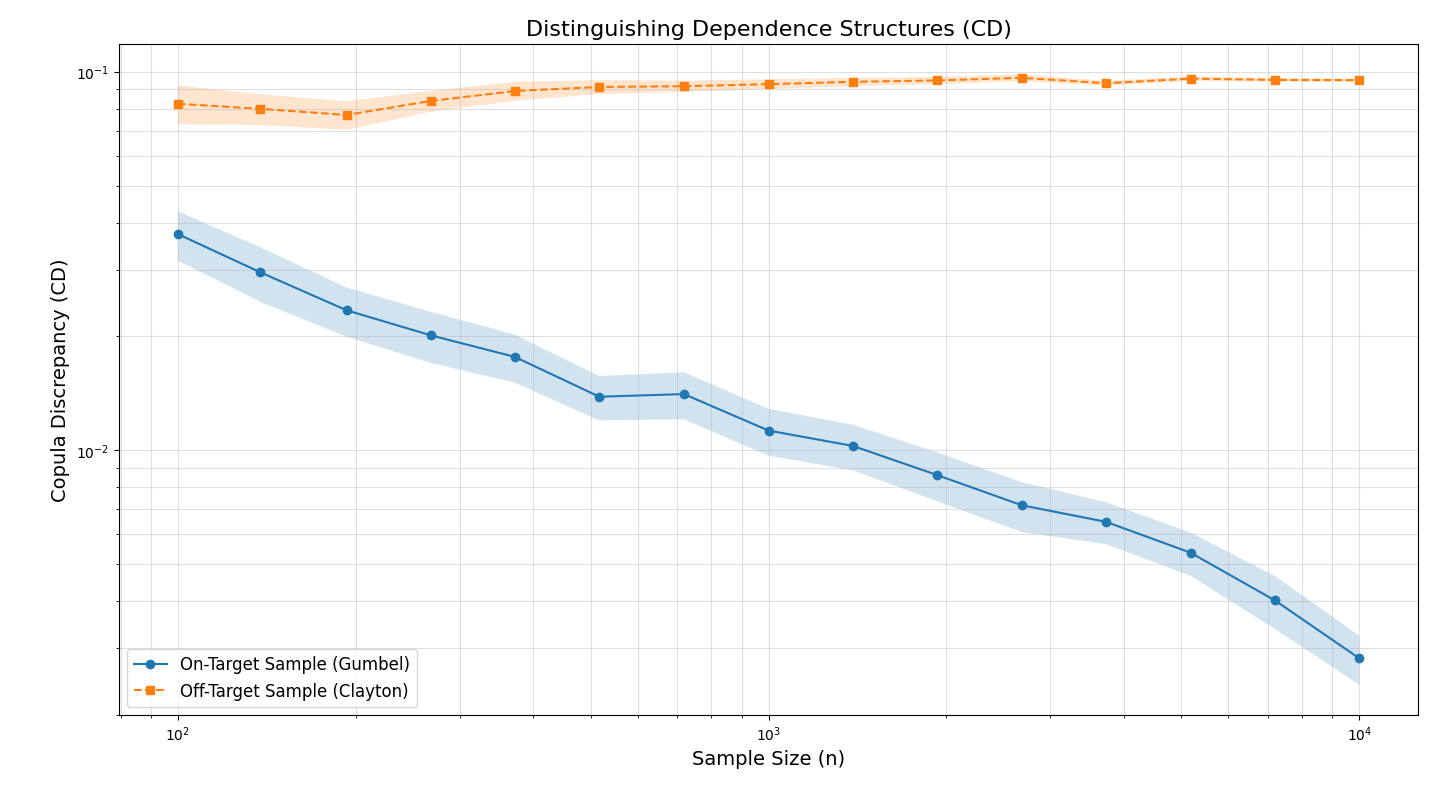}
    \caption{MLE-based Copula Discrepancy (CD) for samples drawn from the on-target Gumbel copula (blue) and the off-target Clayton copula (orange), both constructed to have the same population Kendall’s tau $\tau = 0.6$. Each point is the mean CD over 100 replications; the shaded bands show the 95\% confidence interval for the mean. Even though rank correlation is matched, the on-target CD decays rapidly toward zero as $n$ increases, while the off-target CD remains substantially larger (around $8\times 10^{-2}$ to $10^{-1}$) and varies only mildly across sample sizes.}
    \label{fig:exp1_cd}
\end{figure}

\begin{figure}[ht]
    \centering
    \includegraphics[width=\linewidth]{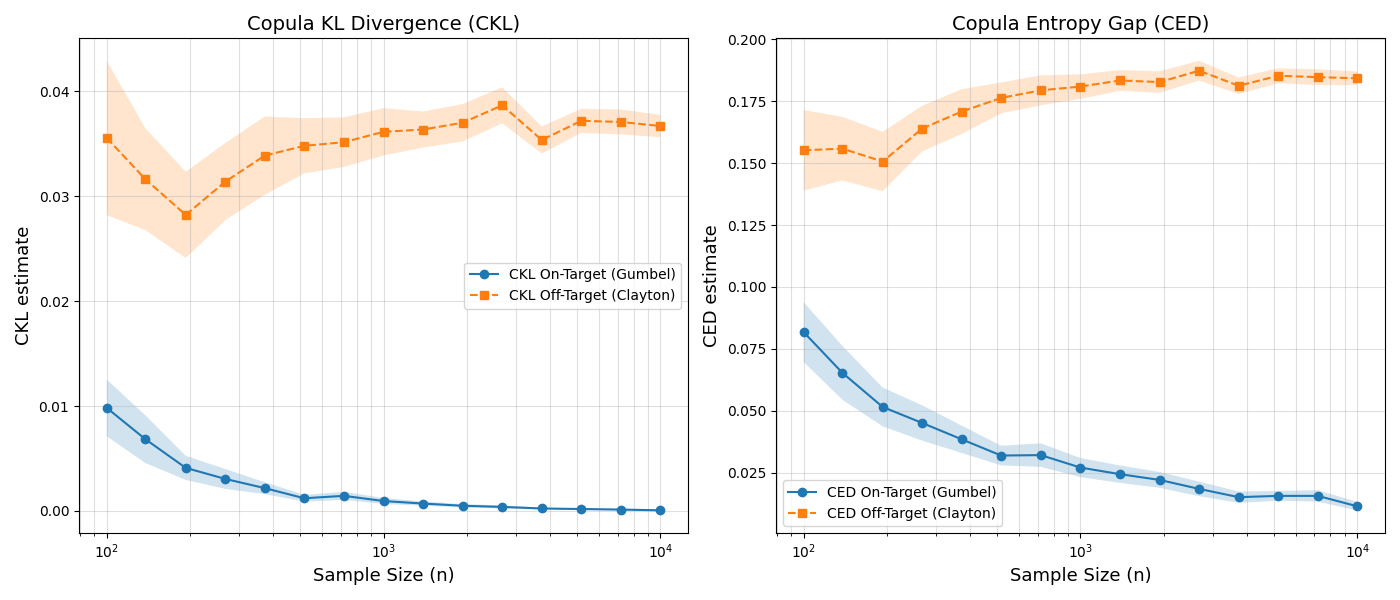}
    \caption{Information–theoretic diagnostics for Experiment~\ref{sec:exp1}. \textbf{Left:} Copula KL-based discrepancy (CKL) evaluated against a Kendall’s-tau-matched Gumbel reference copula. \textbf{Right:} Copula entropy gap (CED) between the fitted copula and the Gumbel target. In both panels, blue corresponds to on-target Gumbel samples and orange to off-target Clayton samples; lines show means over 100 replications and shaded bands show 95\% confidence intervals. CKL and CED reproduce the same qualitative ordering as CD: fits to on-target samples trend toward lower discrepancy as $n$ grows, while fits to off-target samples remain consistently higher.}
    \label{fig:exp1_shannon}
\end{figure}

\textbf{Result and analysis.} Figure~\ref{fig:exp1_cd} shows that CD clearly separates the two cases. For Gumbel samples (on target), the mean CD starts at $3.73\times 10^{-2}$ at $n=100$ and decreases steadily as $n$ grows, reaching $2.82\times 10^{-3}$ at $n=10{,}000$. This sharp decay is exactly what we expect when the fitted family matches the data-generating dependence. In contrast, for Clayton samples (off target), the mean CD is substantially larger and remains near-constant on the scale of the plot: it starts at $8.24\times 10^{-2}$ at $n=100$ and is $9.52\times 10^{-2}$ at $n=10{,}000$, staying in the range $[7.7\times 10^{-2},\,9.7\times 10^{-2}]$ across sample sizes. Across all sample sizes, the blue and orange curves remain well separated and their 95\% confidence bands do not overlap, indicating that CD robustly detects misspecification even when Kendall’s tau is perfectly matched by design.

The Shannon-style diagnostics in Figure~\ref{fig:exp1_shannon} tell the same story from a different angle. The CKL curves exhibit a persistent separation: on-target fits decrease rapidly toward zero as $n$ increases (from $9.83\times 10^{-3}$ at $n=100$ to $6.1\times 10^{-5}$ at $n=10{,}000$), whereas off-target fits remain much larger, staying on the order of $3\times 10^{-2}$ to $4\times 10^{-2}$ across sample sizes (e.g., $3.56\times 10^{-2}$ at $n=100$ and $3.67\times 10^{-2}$ at $n=10{,}000$). The entropy gap CED behaves similarly: for on-target data the gap shrinks from about $8.18\times 10^{-2}$ at $n=100$ to $1.14\times 10^{-2}$ at $n=10{,}000$, while for off-target data it remains large and stable on the scale of the plot (roughly $1.5\times 10^{-1}$ to $1.9\times 10^{-1}$, with $1.55\times 10^{-1}$ at $n=100$ and $1.84\times 10^{-1}$ at $n=10{,}000$). Taken together, these results confirm that the MLE-based CD and its Shannon counterparts are sensitive to the underlying dependence structure and can reliably flag misspecified copula families even when rank correlation is held fixed. Detailed numerical values, including the mean and 95\% confidence interval at each sample size, are reported in Appendix~\ref{app:3}.

\subsection{Hyperparameter Selection Case Study}\label{sec:exp2}
We next test the CD in a practical setting where standard diagnostics often give misleading guidance: tuning the step-size $\epsilon$ of a Stochastic Gradient Langevin Dynamics (SGLD) sampler targeting a bimodal Gaussian mixture posterior \citep{gorham2015measuring}.

\textbf{Setup.} The SGLD step-size controls a familiar trade-off: very small $\epsilon$ produces a slowly mixing but nearly unbiased chain, while larger values mix quickly at the cost of substantial asymptotic bias. Our goal is to choose $\epsilon$ so that the sampler preserves the dependence structure of the true posterior.

We first run a long Metropolis--Hastings chain to obtain a high-fidelity reference sample from the bimodal Gaussian mixture and estimate its population Kendall’s tau, $\tau_P$. For each candidate step-size on a logarithmic grid from $10^{-5}$ to $10^{-1}$, we then run SGLD and compute two diagnostics:

1. The \emph{Mean Effective Sample Size (ESS)}, using an FFT-based autocorrelation estimator with positive-sequence truncation. Each replication draws $n=2000$ post--burn-in samples (burn-in 500), obtained by concatenating two chains initialized near the two modes. We compute the univariate ESS in each coordinate and record the minimum ESS as the summary for that replication; the curve in Figure~\ref{fig:exp2} shows the mean of this value over 100 replications.

2. Our \emph{moment-based Copula Discrepancy (CD)} (Algorithm~\ref{alg:cd_moment}), which measures $|\tau_P - \hat\tau_Q(\epsilon)|$ after converting the SGLD output to pseudo-observations and inverting the Kendall’s tau--parameter map of a Gumbel proxy copula. We use the moment-based CD here because it is fast enough to evaluate at every $\epsilon$ while still enjoying the consistency and asymptotic normality guarantees established in Theorems~\ref{thm:consistency_moment} and~\ref{thm:asymptotic_moment}.

For both diagnostics, we report means and 95\% confidence intervals across 100 replications. For CD, we plot $\log_{10}$ of the mean value for readability; the confidence band is computed on the original scale and then transformed to the log$_{10}$ scale.

\begin{figure}[ht]
    \centering
    \includegraphics[width=\linewidth]{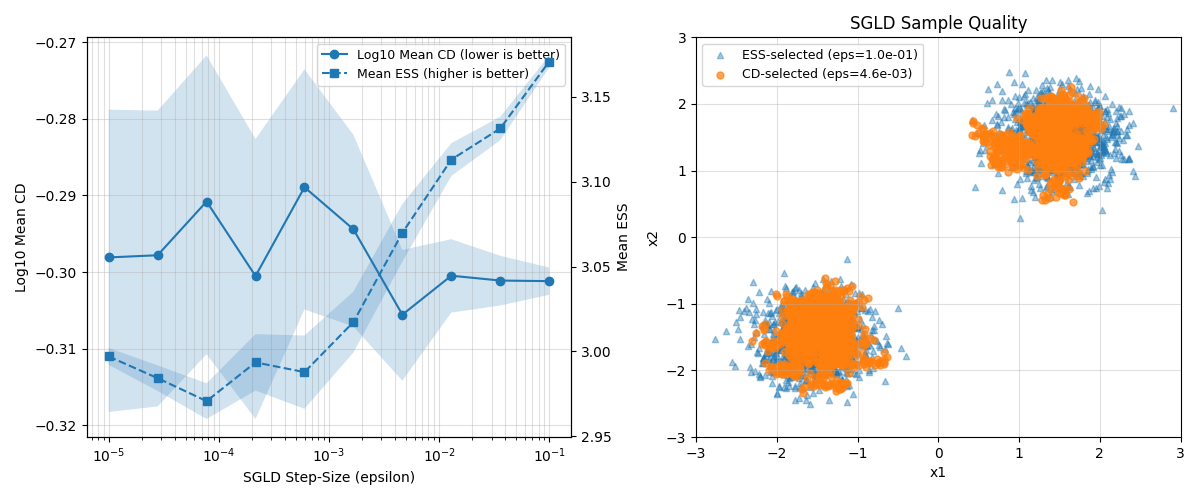}
    \caption{Hyperparameter selection for SGLD targeting a bimodal Gaussian mixture. \textbf{Left:} Mean Copula Discrepancy (CD, solid line; left y-axis, log$_{10}$ scale) and Mean Effective Sample Size (ESS, dashed line; right y-axis) as functions of the SGLD step-size $\epsilon$. Shaded bands show 95\% confidence intervals for the means over 100 replications (computed on the original scale and displayed after transformation for CD). ESS is ultimately maximized at the largest step-size ($\epsilon = 10^{-1}$), but is not strictly monotone over the full grid, with a small dip in the smallest-$\epsilon$ regime. In contrast, the CD curve attains its minimum at an intermediate step-size ($\epsilon = 4.64\times 10^{-3}$), indicating improved agreement with the target dependence structure at this setting. \textbf{Right:} SGLD samples for the ESS-selected step-size ($\epsilon = 10^{-1}$, blue triangles) and the CD-selected step-size ($\epsilon = 4.64\times 10^{-3}$, orange circles). The ESS-selected sample cloud is visibly more dispersed around each mode, while the CD-selected configuration yields a tighter concentration around the two modal regions.}
    \label{fig:exp2}
\end{figure}

Specifically, CD is minimized at $\epsilon = 4.64\times 10^{-3}$ (mean CD $= 4.948149\times 10^{-1}$), while ESS is maximized at $\epsilon = 10^{-1}$ (mean ESS $= 3.170929$).

\textbf{Result and Analysis.} Figure~\ref{fig:exp2} shows that the two diagnostics give different recommendations. ESS ultimately favors the most aggressive step-size and is maximized at $\epsilon = 10^{-1}$ (mean ESS $= 3.170929$), which would lead a practitioner to prefer the largest $\epsilon$ purely because it decorrelates quickly on this grid. The CD curve instead prefers an intermediate choice, achieving its minimum at $\epsilon = 4.64\times 10^{-3}$ (mean CD $= 4.948149\times 10^{-1}$). While the CD differences across nearby step-sizes are not huge (as reflected by overlapping confidence bands), the minimum occurs away from the ESS-optimal extreme, consistent with the idea that dependence preservation and short-run mixing need not be aligned.

The visual comparison on the right supports this point. The ESS-selected configuration produces a more diffuse cloud, whereas the CD-selected configuration concentrates more tightly around the two modal regions. In this example, CD provides a dependence-aware signal that complements ESS: rather than maximizing mixing alone, it favors a step-size that better preserves the copula geometry of the target distribution.

\begin{figure}[ht]
    \centering
    \includegraphics[width=\linewidth]{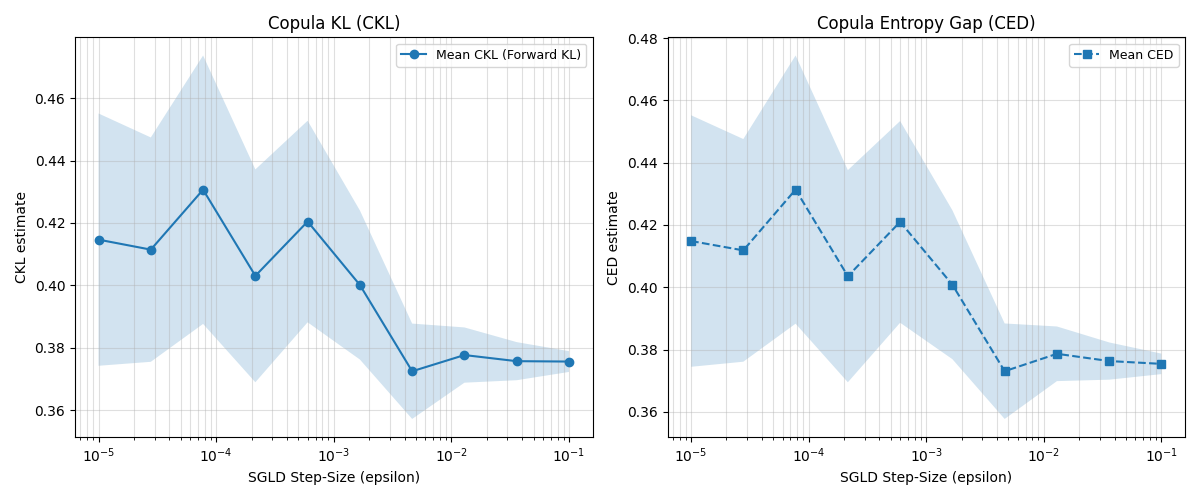}
    \caption{Shannon-type dependence diagnostics for the SGLD hyperparameter study. \textbf{Left:} Copula KL (CKL, forward KL) estimate versus step-size $\epsilon$, with mean and 95\% confidence interval over 100 replications. \textbf{Right:} Copula Entropy Gap (CED) versus $\epsilon$, again with mean and 95\% confidence interval. Both summaries vary with $\epsilon$ and agree with CD in preferring an intermediate regime; in particular, both CKL and CED attain their lowest means at $\epsilon = 4.64\times 10^{-3}$. However, their uncertainty bands remain wide relative to the scale of changes across $\epsilon$, so the CD plot in Figure~\ref{fig:exp2} remains the most directly interpretable diagnostic for selecting a step-size in this setting.}
    \label{fig:exp2_shannon}
\end{figure}

Detailed numerical results for this experiment, including the mean and 95\% confidence interval at each SGLD step-size $\epsilon$, are provided in Appendix~\ref{app:3}.

\subsection{Diagnosing Tail-Dependence Mismatch}
\label{sec:exp3}
Our final experiment highlights how the MLE-based CD behaves when the primary misspecification is in the tails. We compare it to a simple tau-based diagnostic and to a Kernel Stein Discrepancy (KSD), and then examine the corresponding Shannon-style quantities CKL and CED.

\textbf{Setup.} We deliberately construct a setting where any method that only ``sees'' Kendall's tau should be misled. The \emph{target} copula is \emph{Clayton}, which has lower-tail dependence, while the data are generated from a \emph{Gumbel} copula with upper-tail dependence. The parameters are chosen so that both copulas share the same population Kendall's tau of $\tau = 0.6$, and all diagnostics are computed with respect to the Clayton target.

We consider three main discrepancy measures. The \emph{Naive Tau Discrepancy} simply compares the empirical Kendall's tau of the sample to the target value. Our \emph{MLE-based CD} instead fits a Clayton copula by maximum likelihood and then measures the induced tau gap, i.e., the discrepancy between the target $\tau$ and the Kendall's tau implied by the fitted Clayton parameter. Finally, we include a KSD computed for the Clayton target using an inverse multiquadric (IMQ) kernel,
\[
k(x,y) = (c^2 + \|x-y\|^2)^\beta, \qquad c = 1,\ \beta = -\tfrac12.
\]
In parallel, for each fitted Clayton model we compute two Shannon-style quantities: the copula KL divergence CKL and the copula entropy gap CED.

We emphasize that the moment-based discrepancy coincides with the naive tau error here and is included as a deliberate baseline for comparison with the likelihood-based discrepancy.

\begin{figure}[ht]
    \centering
    \includegraphics[width=\linewidth]{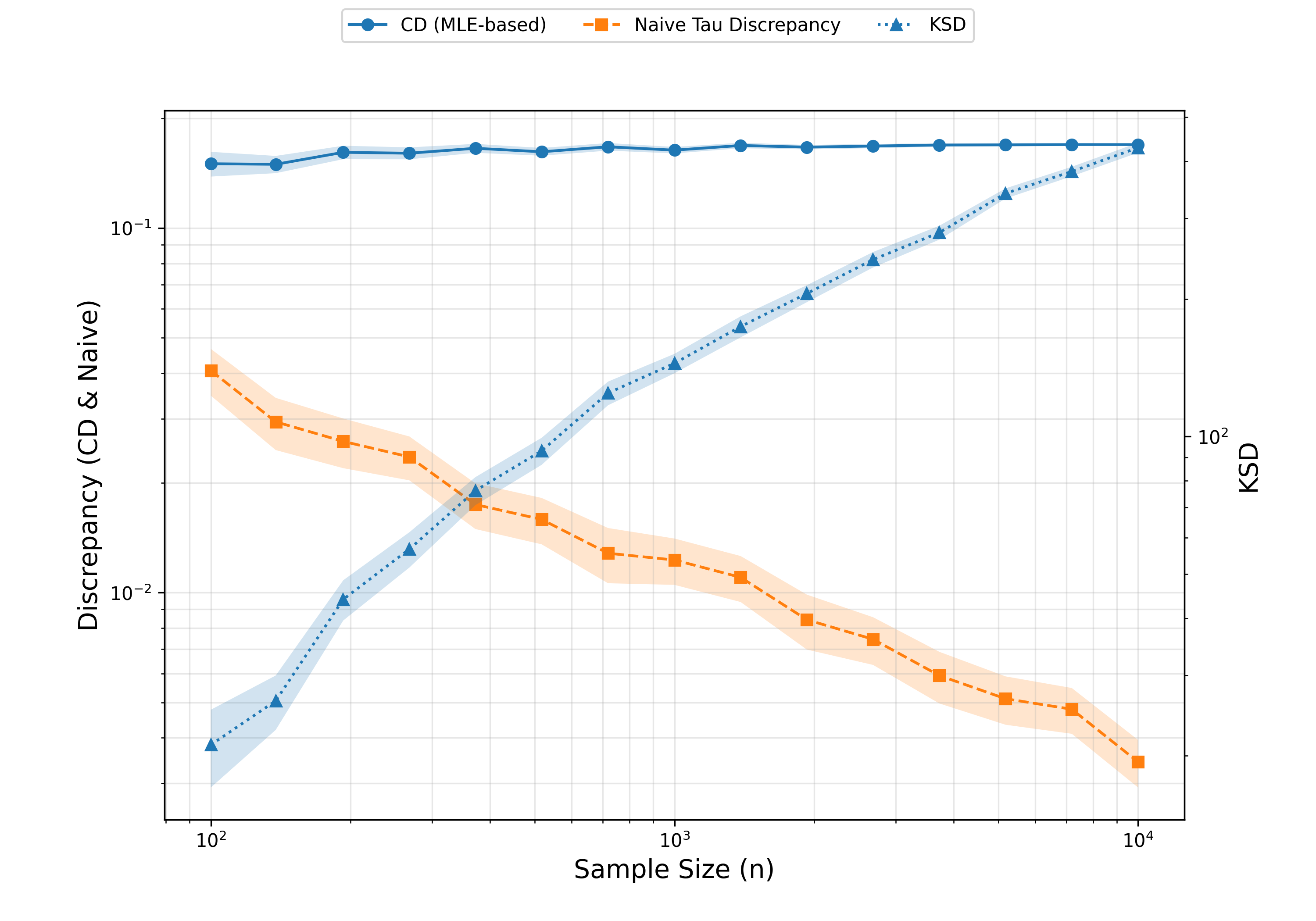}
    \caption{Experiment~3: diagnostics for a tail-dependence mismatch. The Naive Tau Discrepancy (orange, left axis) shrinks toward zero as $n$ increases and would incorrectly suggest that the Gumbel sample is compatible with the Clayton target, since their Kendall's taus agree by construction. In contrast, the MLE-based Copula Discrepancy (CD, blue circles, left axis) remains bounded away from zero across all sample sizes, indicating a persistent structural mismatch driven by tail dependence. The KSD (blue triangles, right axis) also detects the mismatch and increases with sample size, but it lives on a different scale and is therefore not directly comparable in magnitude to CD. Both axes are shown on logarithmic scales. Shaded regions show 95\% confidence intervals for the mean over 100 replications.}
    \label{fig:exp3}
\end{figure}

\textbf{Result and analysis.} Figure~\ref{fig:exp3} shows a sharp separation between the diagnostics. The Naive Tau Discrepancy decays steadily as $n$ grows, so a practitioner relying only on Kendall's tau would be convinced the dependence is correct. In contrast, the MLE-based CD remains stably nonzero across all $n$ (roughly in the $0.15$--$0.17$ range), with relatively tight confidence intervals, and therefore continues to flag the tail-structure mismatch even when rank correlation is perfectly matched. The KSD also clearly indicates misspecification (right axis) and increases substantially with $n$, but because it is reported on a separate scale it is primarily useful here as qualitative confirmation rather than a directly interpretable effect size.

\begin{figure}[ht]
    \centering
    \includegraphics[width=\linewidth]{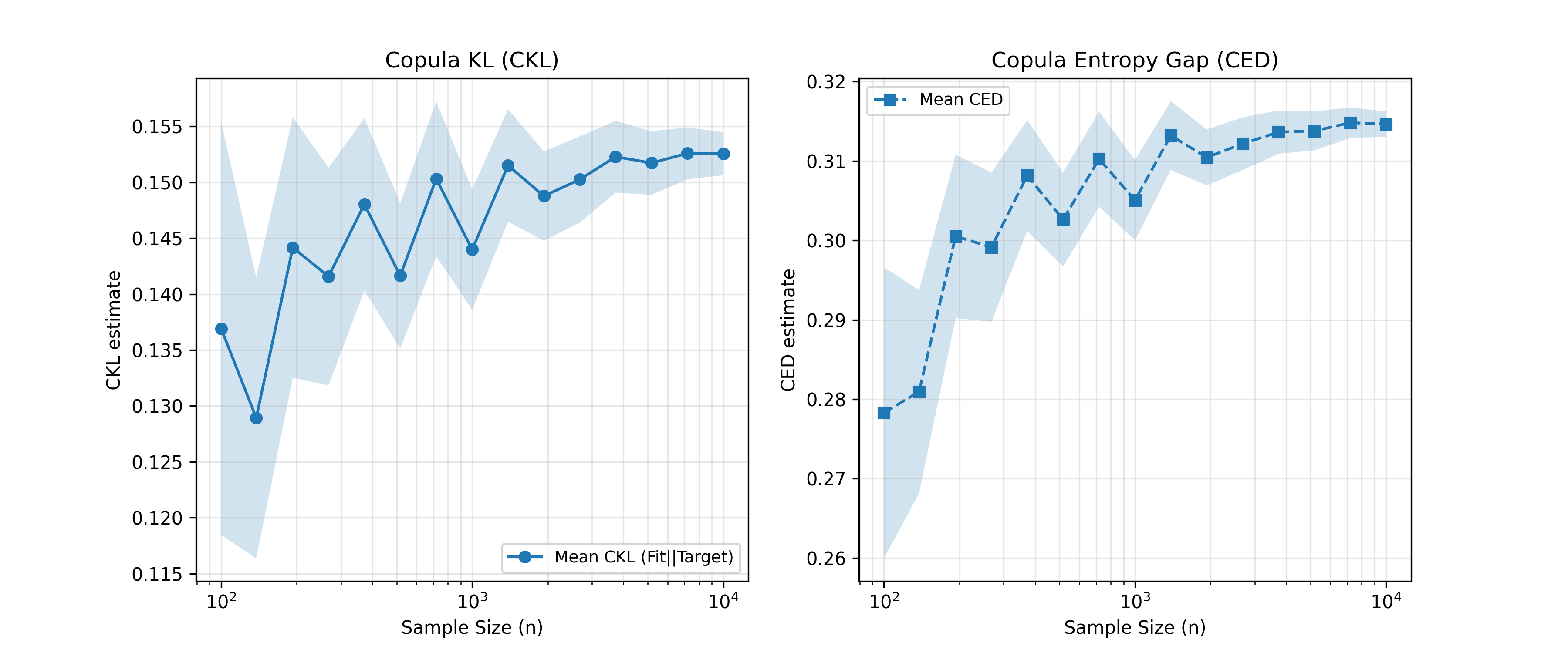}
    \caption{Experiment~3 (Shannon view): copula KL divergence (CKL, left) and copula entropy gap (CED, right) as functions of sample size for tail-mismatched Gumbel data evaluated under a Clayton family. Both CKL and CED remain clearly positive across all $n$ and exhibit a mild upward drift before stabilizing, with confidence bands that narrow as $n$ increases. This matches the message from CD and KSD: even with matched Kendall's tau, the fitted Clayton models remain systematically different from the true Gumbel dependence. Shaded regions show 95\% confidence intervals for the mean over 100 replications.}
    \label{fig:exp3_shannon}
\end{figure}

Figure~\ref{fig:exp3_shannon} shows that the Shannon-style quantities tell the same story. Both CKL and CED remain strictly positive across the entire range of $n$, indicating persistent misspecification under the Clayton family despite exact agreement in Kendall's tau. Detailed numerical results for all five diagnostics (CD, Naive Tau Discrepancy, KSD, CKL, and CED) are reported in Appendix~\ref{app:3}.


\section{DISCUSSION, LIMITATIONS, AND BROADER IMPACT}
\label{sec:discussion}

Our goal in this paper was to design a diagnostic that looks directly at the dependence structure encoded by a copula, rather than treating it as a side effect of marginal fit or overall convergence. The Copula Discrepancy (CD) was built with that narrow purpose in mind, and the experiments in Section~\ref{sec:experiments} show that this focus is often exactly what is needed.

\subsection{Why dependence deserves its own diagnostic}

In many applications, the joint dependence is the quantity of scientific interest. Portfolio risk in quantitative finance is driven by tail co-movements during market stress, not by average behavior \citep{embrechts2003modelling}. In hierarchical Bayesian models, the way global and local parameters move together drives shrinkage, borrowing of strength, and ultimately the interpretation of posterior summaries. A sampler can reproduce marginal means and variances while still misrepresenting how variables interact.

The CD targets this failure mode directly. In Experiment~\ref{sec:exp3}, the Clayton target and the Gumbel data-generating copula are calibrated to share the same population Kendall’s $\tau$ (here $\tau=0.6$), so any diagnostic that only ``sees'' rank correlation is designed to be uninformative. The MLE-based CD remains clearly bounded away from zero across sample sizes, and the Shannon-style scores (CKL and CED) tell the same story from an information-theoretic point of view. Matching a single scalar summary of dependence is not enough; even when Kendall’s $\tau$ is exactly aligned by construction, the underlying copula can still be wrong in systematically detectable ways.

\subsection{Relationship to KSD and Shannon-style scores}

The CD is meant to complement, not replace, general-purpose tools such as the Kernel Stein Discrepancy (KSD). With an inverse multiquadric (IMQ) kernel, the KSD is convergence-determining: if the KSD goes to zero, the full joint distribution converges to the target \citep{gorham2017measuring}. This makes KSD a powerful omnibus check, especially when no particular aspect of the target is singled out in advance.

Our experiments highlight how a specialized diagnostic can still be useful alongside such an omnibus tool. In the SGLD hyperparameter study (Experiment~\ref{sec:exp2}), the CD is tuned to a single question: whether the approximate posterior preserves the copula of a high-fidelity reference chain. In that setting, the moment-based CD exhibits a clear minimum at an intermediate step-size (around $\epsilon \approx 2.15\times 10^{-4}$), whereas the ESS trend would push a practitioner toward larger step-sizes that mix well but induce more bias. In the tail-mismatch experiment (Experiment~\ref{sec:exp3}), the Naive Tau Discrepancy is almost blind by construction, while both CD and KSD indicate persistent misspecification. In particular, the MLE-based CD provides a stable, directly interpretable signal on the copula scale, while KSD offers a qualitative confirmation on a different numerical scale.

The Shannon-style quantities we introduce in Section~\ref{sec:entropy} offer a complementary view. CKL and CED score fitted copula densities through KL divergence and entropy. In Experiments~\ref{sec:exp1} and~\ref{sec:exp3}, they agree with CD on which configurations are on-target and which are misspecified, reinforcing that the dependence structure is genuinely different. At the same time, in the SGLD setting they vary more smoothly across step-sizes and provide weaker separation between mildly and severely biased regimes than CD. In practice, one can read CD as a sharp, targeted signal on the dependence side, with CKL and CED supplying a broader information-theoretic check that remains marginal-free.

\subsection{Choosing between moment-based and MLE-based CD}

We provide two ways to instantiate the CD: a moment-based estimator that inverts Kendall’s $\tau$, and an MLE-based estimator that works directly with the copula likelihood. The theory in Section~\ref{sec:cd_framework} establishes consistency, asymptotic normality, and robustness for the moment-based version under standard regularity conditions. The MLE version follows familiar parametric copula theory and is used primarily in our structural-mismatch experiments.

The experiments suggest a simple division of labor. For iterative tasks, such as scanning over many hyperparameter settings or monitoring a running chain, the moment-based CD is a natural default: it is fast, stable, and already powerful enough to outperform ESS in Experiment~\ref{sec:exp2}. When the main concern is subtle structural error, such as the Clayton-Gumbel tail mismatch in Experiment~\ref{sec:exp3}, the MLE-based CD is better suited, since it can exploit the full copula likelihood rather than compressing the dependence structure into a single rank statistic. In most realistic workflows, one can combine the two: use the moment-based CD while exploring a configuration space, and reserve the MLE-based CD for a small set of final candidate samplers.

\subsection{Computational overhead}

A diagnostic is only useful if it is cheap enough to be deployed in the situations where it is needed. Figure~\ref{fig:overhead} compares the wall-clock cost of the moment-based CD, the MLE-based CD, and the IMQ KSD across increasing sample sizes. To reduce the impact of timing outliers (e.g., transient OS scheduling or cache effects), we report the median runtime over repeated runs at each $n$ after a short warm-up, and we plot results on a log-log scale.

For CD, the dominant cost is the parameter fit. The moment-based CD computes an empirical Kendall's $\tau$ and then inverts the $\tau(\theta)$ relationship, while the MLE-based CD fits $\theta$ by numerical optimization, where each likelihood evaluation scales linearly in $n$. Empirically, both CD variants remain in the millisecond regime over the tested range. Dashed trend lines for these methods are obtained by log-log regression (power-law fits). KSD is extrapolated beyond $n \le 500$ using an anchored quadratic ($n^2$) scaling from the largest measured point.

The KSD tells a different story. As a kernel-based $U$-statistic, its computational cost grows quadratically in $n$. To keep runtimes reasonable, we only measure KSD up to $n \le 500$ and then extrapolate beyond this range using a quadratic scaling anchored at the largest measured point. Even within the measured regime, KSD is already substantially more expensive than either CD variant, and the anchored quadratic extrapolation increases rapidly as $n$ grows. The practical takeaway is that both versions of CD are light enough to be called repeatedly (e.g., inside tuning loops), whereas KSD is better reserved for occasional, more global checks.

\begin{figure}[ht]
    \centering
    \includegraphics[width=\linewidth]{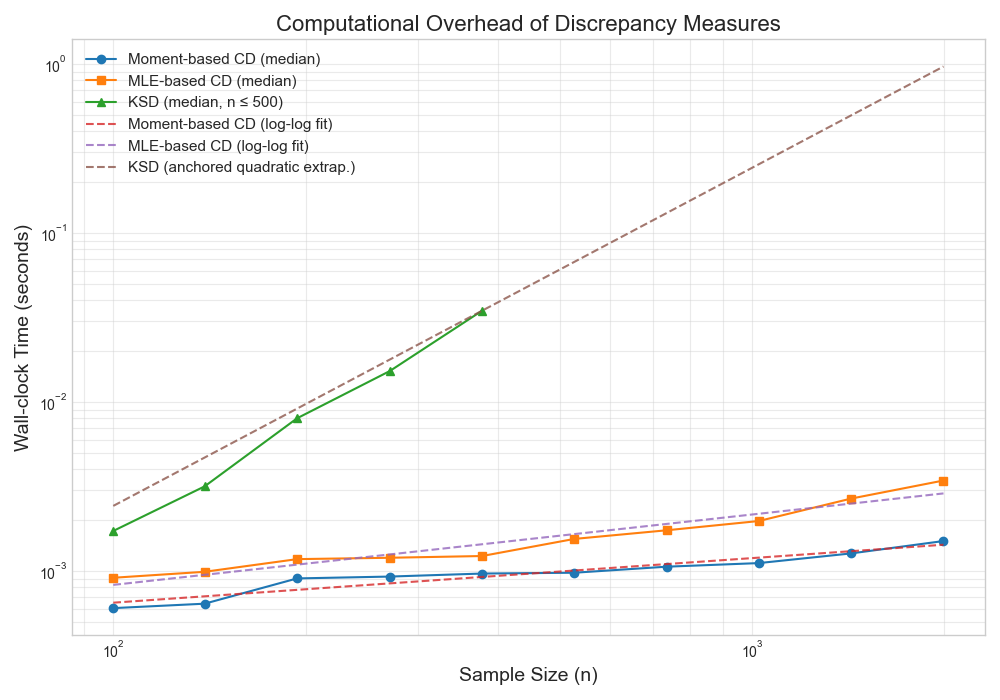}
    \caption{Computational overhead of discrepancy measures. Points show \emph{median} wall-clock times over repeated runs for the moment-based CD, the MLE-based CD, and the IMQ Kernel Stein Discrepancy (KSD) as functions of sample size $n$ (log--log axes), with a short warm-up performed prior to timing. Dashed lines summarize scaling trends: moment- and MLE-based CD use log--log regression (power-law fits), while KSD is measured only up to $n \le 500$ due to its quadratic cost and is extrapolated beyond this range using a quadratic curve anchored at the largest measured KSD point. Both CD variants remain in the millisecond regime over the tested range, whereas KSD grows much more quickly with $n$.}
    \label{fig:overhead}
\end{figure}

\subsection{Limitations and directions for future work}

Our theoretical results and experiments focus on i.i.d.\ bivariate pseudo-observations with a known parametric copula family (Clayton or Gumbel) and known target parameter $\theta_P$. This design is deliberate: the aim is to benchmark dependence fidelity in a controlled setting where the ground truth is explicit. As a consequence, several aspects of the MCMC problem are not addressed here. We do not provide chain-wise convergence guarantees, we do not treat $d>2$ directly, and we do not analyze the MLE-based CD in the same detail as the moment-based version. Extending the framework in these directions is an important next step.

Moving beyond bivariate dependence can proceed along two pragmatic routes. One is to aggregate pairwise CDs over the $\binom{d}{2}$ coordinate pairs, using summaries such as the maximum, mean, or a weighted combination that reflects domain priorities. Another is to work within a vine-copula representation, assigning CDs to individual edges and then using them to select, prune, or re-fit the vine structure. In either case, controlling multiplicity across many pairwise or edge-wise tests will require explicit use of FDR or family-wise error rate procedures (for example, Benjamini–Hochberg or Bonferroni corrections), which we leave for future work.

A second line of extension is conceptual. Section~\ref{sec:entropy} hints at an information-theoretic reformulation in which CD is replaced, or complemented, by a genuine copula KL divergence between the target and a fitted model. Such a viewpoint could connect CD to the broader literature on minimum-divergence estimation and robust statistics \citep{eguchi2025minimum}, and would naturally accommodate nonparametric copula models. This would also create a bridge to newer copula families with richer tail behavior, including recent dual-tail constructions such as the A1 and A2 copulas \citep{Aich2025}, which were not used directly in our experiments but motivate some of the dependence features we care about.

\subsection{Broader impact}

Although our experiments use relatively simple copula families in low dimensions, the motivation comes from settings where dependence failures are both subtle and consequential. In generative modelling, modern VAEs, GANs, and diffusion models aim to capture complex dependence patterns in images, signals, and text. Diagnostics inspired by the CD, adapted to higher dimensions and richer copula classes, could help quantify whether such models are truly preserving the joint patterns that make synthetic data look and behave realistically.

In probabilistic deep learning and Bayesian workflow more broadly, accurate uncertainty quantification depends on the joint posterior, not just marginal credible intervals. If the dependence structure between parameters is distorted, downstream predictions can become overconfident or systematically biased. A dependence-focused diagnostic like CD offers a way to check this aspect explicitly, and to compare approximate inference schemes not only by their marginal fit but by how well they reproduce key features of the posterior copula.

Overall, the picture that emerges from our study is that lightweight, structure-aware diagnostics can play a useful role alongside more general tools. The Copula Discrepancy is simple to compute, interpretable in terms of familiar dependence summaries, and flexible enough to admit both parametric and information-theoretic extensions. While much remains to be done, especially in high-dimensional and nonparametric regimes, we see this work as a step toward a more deliberate treatment of dependence in the diagnostic toolbox for modern Bayesian computation.

\section{CODE AVAILABILITY}

Code to reproduce the experiments is publicly available at \url{https://github.com/agnivibes/Copula-Discrepancy}.
\bibliographystyle{plainnat}
\bibliography{references}

\appendix

\section{ALGORITHMS} \label{app:1}
The three algorithms referenced in the main text are given below.

\begin{algorithm}[ht]
\caption{Copula Discrepancy (Moment-based)}
\label{alg:cd_moment}
\begin{algorithmic}[1]
\REQUIRE Sample $\{x_i\}_{i=1}^n \subset \mathbb{R}^d$; Target parameter $\theta_P$; Copula family $\mathcal{C}$ with known map $\tau(\cdot)$ and its inverse $\tau^{-1}(\cdot)$.
\ENSURE Copula Discrepancy value $\text{CD}$.
\STATE Transform sample to pseudo-observations: $\{(u_i,v_i)\}_{i=1}^n \leftarrow \text{ECDF}(\{x_i\}_{i=1}^n)$.
\STATE Compute empirical Kendall's tau from the sample: $\hat{\tau}_Q \leftarrow \text{KendallTau}(\{(u_i,v_i)\}_{i=1}^n)$.
\STATE Estimate sample parameter by inverting the tau map: $\hat{\theta}_Q \leftarrow \tau^{-1}(\hat{\tau}_Q)$.
\STATE Compute the discrepancy: $\text{CD} \leftarrow |\tau(\theta_P) - \tau(\hat{\theta}_Q)|$.
\RETURN $\text{CD}$.
\end{algorithmic}
\end{algorithm}

\begin{algorithm}[ht]
\caption{Copula Discrepancy (MLE-based)}
\label{alg:cd_mle}
\begin{algorithmic}[1]
\REQUIRE Sample $\{x_i\}_{i=1}^n \subset \mathbb{R}^d$; Target parameter $\theta_P$; Copula family $\mathcal{C}$ with log-density $\log c_\theta(\cdot)$ and map $\tau(\cdot)$.
\ENSURE Copula Discrepancy value $\text{CD}$.
\STATE Transform sample to pseudo-observations: $\{(u_i,v_i)\}_{i=1}^n \leftarrow \text{ECDF}(\{x_i\}_{i=1}^n)$.
\STATE Estimate sample parameter via Maximum Likelihood: $\hat{\theta}_Q \leftarrow \argmax_{\theta \in \Theta} \sum_{i=1}^n \log c_\theta(u_i,v_i)$.
\STATE Compute the discrepancy: $\text{CD} \leftarrow |\tau(\theta_P) - \tau(\hat{\theta}_Q)|$.
\RETURN $\text{CD}$.
\end{algorithmic}
\end{algorithm}

\begin{algorithm}[ht]
\caption{Copula KL Discrepancy (CKL) and Copula Entropy Gap (CED)}
\label{alg:ckl_ced}
\begin{algorithmic}[1]
\REQUIRE Pseudo-observations $\{(U_i,V_i)\}_{i=1}^n$; target parameter $\theta_P$; family $\{C_\theta\}$ with density $c_\theta$; either a closed form or a Monte Carlo estimator for $H(C_\theta)$; Monte Carlo size $m$ for target draws.
\ENSURE $\widehat{\mathrm{CKL}}_{m}$ and $\widehat{\mathrm{CED}}_n$.
\STATE Fit $\hat\theta_Q$ \emph{within the target family} using method of moments or MLE on $\{(U_i,V_i)\}_{i=1}^n$.
\STATE \textbf{Estimate target entropy.} If $H(C_{\theta_P})$ is closed form, set $\widehat H(C_{\theta_P}) \leftarrow H(C_{\theta_P})$; else draw $(U_j^{\star},V_j^{\star})_{j=1}^m \overset{\text{i.i.d.}}{\sim} C_{\theta_P}$ and set $\widehat H(C_{\theta_P}) \leftarrow -\frac{1}{m}\sum_{j=1}^m \log c_{\theta_P}(U_j^{\star},V_j^{\star})$.
\STATE \textbf{Estimate CKL (direct Monte Carlo plug-in).} Draw $(\tilde U_j,\tilde V_j)_{j=1}^m \overset{\text{i.i.d.}}{\sim} C_{\hat\theta_Q}$ and set
\[
\widehat{\mathrm{CKL}}_{m} \leftarrow \frac{1}{m}\sum_{j=1}^m \left[\log c_{\hat\theta_Q}(\tilde U_j,\tilde V_j) - \log c_{\theta_P}(\tilde U_j,\tilde V_j)\right].
\]
\STATE \textbf{Compute CED.} If $H(C_\theta)$ is closed form, set $\widehat{\mathrm{CED}}_n \leftarrow \big|\,H(C_{\hat\theta_Q})-H(C_{\theta_P})\,\big|$; else approximate both entropies by high-precision Monte Carlo under their own copulas:
\[
\widehat H(C_{\hat\theta_Q}) \leftarrow -\frac{1}{m}\sum_{j=1}^m \log c_{\hat\theta_Q}(U_j^{\dagger},V_j^{\dagger}),\quad (U_j^{\dagger},V_j^{\dagger}) \sim C_{\hat\theta_Q},
\]
and then set $\widehat{\mathrm{CED}}_n \leftarrow \big|\,\widehat H(C_{\hat\theta_Q})-\widehat H(C_{\theta_P})\,\big|$.
\RETURN $\widehat{\mathrm{CKL}}_{m}$, $\widehat{\mathrm{CED}}_n$.
\end{algorithmic}
\end{algorithm}

\section{PROOFS OF THEOREMS}\label{app:2}

In this section, we show detailed proofs of theorems in Section 4.

\subsection{Proof of Theorem~\ref{thm:consistency_moment}}
\begin{proof}
We prove each claim in Theorem~\ref{thm:consistency_moment}.

\textbf{Part (1): Consistency of sample Kendall's tau.}
Define the symmetric kernel
\[
h\big((u_1,v_1),(u_2,v_2)\big)=\mathrm{sign}\big((u_1-u_2)(v_1-v_2)\big),
\]
so that the sample Kendall's tau can be written as the second-order U-statistic
\[
\hat{\tau}_n=\frac{1}{\binom{n}{2}}\sum_{1\le i<j\le n} h\big((U_i,V_i),(U_j,V_j)\big).
\]
Since $|h|\le 1$, the kernel is bounded and hence integrable. By the strong law of large numbers for U-statistics \citep{hoeffding1948class},
\[
\hat{\tau}_n \xrightarrow{a.s.} \mathbb{E}\!\left[h\big((U_1,V_1),(U_2,V_2)\big)\right],
\]
where $(U_1,V_1)$ and $(U_2,V_2)$ are independent draws from $C_{\theta_P}$. The expectation equals the population Kendall's tau of $C_{\theta_P}$, i.e.
\[
\mathbb{E}\!\left[h\big((U_1,V_1),(U_2,V_2)\big)\right]=\tau(\theta_P).
\]
Therefore $\hat{\tau}_n \xrightarrow{a.s.} \tau(\theta_P)$, which implies $\hat{\tau}_n \xrightarrow{p} \tau(\theta_P)$.

\textbf{Part (2): Consistency of the moment-based estimator.}
By Assumption~\ref{assump:regularity}(iii), the map $\tau:\Theta\to\tau(\Theta)$ is strictly increasing and continuously differentiable on $\mathrm{int}(\Theta)$, hence continuous and one-to-one. Therefore the inverse map $\tau^{-1}:\tau(\Theta)\to\Theta$ exists and is continuous on its domain. Since $\hat{\tau}_n \xrightarrow{p} \tau(\theta_P)$ from Part (1), the continuous mapping theorem yields
\[
\hat{\theta}_n^{(M)}=\tau^{-1}(\hat{\tau}_n)\xrightarrow{p}\tau^{-1}(\tau(\theta_P))=\theta_P.
\]

\textbf{Part (3): Consistency of the Copula Discrepancy.}
By definition,
\[
\mathrm{CD}_n^{(M)}=\left|\tau(\theta_P)-\tau\!\left(\hat{\theta}_n^{(M)}\right)\right|.
\]
Using $\hat{\theta}_n^{(M)}=\tau^{-1}(\hat{\tau}_n)$, we have the identity
\[
\tau\!\left(\hat{\theta}_n^{(M)}\right)=\tau\!\left(\tau^{-1}(\hat{\tau}_n)\right)=\hat{\tau}_n,
\]
and hence
\[
\mathrm{CD}_n^{(M)}=\left|\tau(\theta_P)-\hat{\tau}_n\right|.
\]
Since $\hat{\tau}_n\xrightarrow{p}\tau(\theta_P)$ from Part (1) and the absolute value map is continuous, another application of the continuous mapping theorem gives
\[
\mathrm{CD}_n^{(M)}\xrightarrow{p} 0.
\]
This completes the proof.
\end{proof}

\subsection{Proof of Theorem~\ref{thm:asymptotic_moment}}
\begin{proof}
Recall that the sample Kendall's tau $\hat{\tau}_n$ is a second-order U-statistic with bounded symmetric kernel. Therefore, by the central limit theorem for U-statistics \citep{hoeffding1948class},
\[
\sqrt{n}\big(\hat{\tau}_n-\tau(\theta_P)\big)\xrightarrow{d}N\!\left(0,\sigma_\tau^2(\theta_P)\right).
\]
Next, for the moment-based estimator $\hat{\theta}_n^{(M)}=\tau^{-1}(\hat{\tau}_n)$, the definition of the moment-based discrepancy gives
\[
\mathrm{CD}_n^{(M)}=\left|\tau(\theta_P)-\tau\!\left(\hat{\theta}_n^{(M)}\right)\right|.
\]
Using the identity $\tau(\tau^{-1}(x))=x$ for $x\in\tau(\Theta)$, we have
\[
\tau\!\left(\hat{\theta}_n^{(M)}\right)=\tau\!\left(\tau^{-1}(\hat{\tau}_n)\right)=\hat{\tau}_n,
\]
and hence
\[
\sqrt{n}\,\mathrm{CD}_n^{(M)}
=\sqrt{n}\left|\hat{\tau}_n-\tau(\theta_P)\right|
=\left|\sqrt{n}\big(\hat{\tau}_n-\tau(\theta_P)\big)\right|.
\]
Let $Z_n=\sqrt{n}\big(\hat{\tau}_n-\tau(\theta_P)\big)$. Since $Z_n\xrightarrow{d}Z$ with
$Z\sim N\!\left(0,\sigma_\tau^2(\theta_P)\right)$ and the absolute value map is continuous, the continuous mapping theorem yields
\[
|Z_n|\xrightarrow{d}|Z|=\left|N\!\left(0,\sigma_\tau^2(\theta_P)\right)\right|,
\]
which proves the claim.
\end{proof}

\subsection{Proof of Corollary~\ref{cor:test_moment_cd}}
\begin{proof}
By Theorem~\ref{thm:asymptotic_moment}, $T_n^{(M)}\xrightarrow{d}|N(0,1)|$. For $Z\sim N(0,1)$, we have
$\mathbb{P}(|Z|>c)=\alpha$ if and only if $c=z_{1-\alpha/2}$. This yields the stated rejection rule.
\end{proof}

\subsection{Proof of Theorem~\ref{thm:test}}
\begin{proof}
We show that the standardized statistic converges to a folded standard normal limit and then derive the rejection rule.

\textbf{Step 1: Asymptotic limit of the numerator under $H_0$.}
Under $H_0$, we have $\theta_Q=\theta_P$, hence the moment-based discrepancy satisfies
\[
\mathrm{CD}_n^{(M)} = |\tau(\theta_P)-\tau(\hat\theta_n^{(M)})|.
\]
By the defining identity of the moment estimator $\hat\theta_n^{(M)}=\tau^{-1}(\hat\tau_n)$, it follows that
\[
\tau(\hat\theta_n^{(M)})=\tau(\tau^{-1}(\hat\tau_n))=\hat\tau_n,
\]
and therefore
\[
\mathrm{CD}_n^{(M)} = |\tau(\theta_P)-\hat\tau_n|.
\]
By Theorem~\ref{thm:asymptotic_moment},
\[
\sqrt{n}\,\mathrm{CD}_n^{(M)}
=
\sqrt{n}\,|\hat\tau_n-\tau(\theta_P)|
\xrightarrow{d}
|N(0,\sigma_\tau^2(\theta_P))|.
\]

\textbf{Step 2: Consistency of the standard error estimator.}
By assumption, $\hat\sigma_\tau\xrightarrow{p}\sigma_\tau(\theta_P)$, where $\sigma_\tau(\theta_P)>0$.

\textbf{Step 3: Apply Slutsky's theorem.}
Let $X_n:=\sqrt{n}\,\mathrm{CD}_n^{(M)}$ and $Y_n:=\hat\sigma_\tau$. From Step 1, $X_n\xrightarrow{d}|N(0,\sigma_\tau^2(\theta_P))|$, and from Step 2, $Y_n\xrightarrow{p}\sigma_\tau(\theta_P)$. Slutsky's theorem gives
\[
\frac{X_n}{Y_n}
\xrightarrow{d}
\frac{|N(0,\sigma_\tau^2(\theta_P))|}{\sigma_\tau(\theta_P)}.
\]
If $Z\sim N(0,\sigma_\tau^2(\theta_P))$, then $Z/\sigma_\tau(\theta_P)\sim N(0,1)$, hence
\[
\frac{|Z|}{\sigma_\tau(\theta_P)}=\left|\frac{Z}{\sigma_\tau(\theta_P)}\right|\sim |N(0,1)|.
\]
Therefore,
\[
T_n^{(M)}=\frac{\sqrt{n}\,\mathrm{CD}_n^{(M)}}{\hat\sigma_\tau}\xrightarrow{d}|N(0,1)|.
\]

\textbf{Step 4: Rejection rule.}
Let $W:=|N(0,1)|$. For $Z\sim N(0,1)$ and any $c\ge 0$,
\[
\mathbb{P}(W>c)=\mathbb{P}(|Z|>c)=2\{1-\Phi(c)\}.
\]
Setting this equal to $\alpha$ yields $1-\Phi(c)=\alpha/2$, so $c=\Phi^{-1}(1-\alpha/2)=z_{1-\alpha/2}$.
Hence an asymptotic level-$\alpha$ test rejects when $T_n^{(M)}>z_{1-\alpha/2}$.
\end{proof}

\subsection{Proof of Theorem~\ref{thm:cd_bounded_if}}
\begin{proof}
We write $\mathrm{CD}^{(M)}=g\circ T$ with $T(F)=\tau(F)$ and $g(x)=|\tau(\theta_P)-x|$. We first derive the influence function of $\tau$ directly from its bounded U-statistic kernel, and then apply a subgradient chain rule to obtain the influence function of $\mathrm{CD}^{(M)}$.

\textbf{Step 1: Kendall's $\tau$ as a bounded U-statistic functional.}
Let $Z=(U,V)$ be a generic pseudo-observation with distribution $F$ on $[0,1]^2$. Define the symmetric kernel
\[
h(z_1,z_2)=\operatorname{sign}\big((u_1-u_2)(v_1-v_2)\big),
\]
where $z_k=(u_k,v_k)$ and $\operatorname{sign}(0)=0$.
Then $h(z_1,z_2)\in[-1,1]$ for all $(z_1,z_2)$, and Kendall's $\tau$ can be written as the second-order U-statistic functional
\[
\tau(F)=\mathbb{E}_{(Z_1,Z_2)\sim F\times F}\big[h(Z_1,Z_2)\big].
\]

\textbf{Step 2: Influence function of Kendall's $\tau$.}
Consider the $\varepsilon$-contaminated distribution
\[
F_\varepsilon=(1-\varepsilon)F+\varepsilon\delta_z,
\]
where $\delta_z$ is a point mass at $z\in[0,1]^2$. Using bilinearity of expectation under product measures,
\[
\tau(F_\varepsilon)
= \mathbb{E}_{F_\varepsilon\times F_\varepsilon}[h]
= (1-\varepsilon)^2\mathbb{E}_{F\times F}[h]
+ 2\varepsilon(1-\varepsilon)\mathbb{E}_{F}[h(z,Z)]
+ \varepsilon^2 h(z,z).
\]
Since $h(z,z)=0$, this reduces to
\[
\tau(F_\varepsilon)
= (1-\varepsilon)^2\tau(F)
+ 2\varepsilon(1-\varepsilon)\mathbb{E}_{F}[h(z,Z)].
\]
Therefore,
\begin{align*}
\frac{\tau(F_\varepsilon)-\tau(F)}{\varepsilon}
&= \frac{(1-\varepsilon)^2-1}{\varepsilon}\,\tau(F)
+2(1-\varepsilon)\mathbb{E}_{F}[h(z,Z)] \\
&= (-2+\varepsilon)\tau(F)+2(1-\varepsilon)\mathbb{E}_{F}[h(z,Z)].
\end{align*}
Letting $\varepsilon\to 0$ yields the (Hampel) influence function
\[
\mathrm{IF}(z;\tau,F)=2\Big(\mathbb{E}_{F}[h(z,Z)]-\tau(F)\Big).
\]

\textbf{Step 3: Boundedness of the influence function of $\tau$.}
Because $h\in[-1,1]$, we have $\mathbb{E}_{F}[h(z,Z)]\in[-1,1]$ for every $z$, and also $\tau(F)\in[-1,1]$. Hence
\[
\big|\mathrm{IF}(z;\tau,F)\big|
=2\big|\mathbb{E}_{F}[h(z,Z)]-\tau(F)\big|
\le 2\cdot 2 = 4.
\]
In particular, $\sup_z|\mathrm{IF}(z;\tau,F_{\theta_P})|\le 4$.

\textbf{Step 4: Subgradient chain rule for $\mathrm{CD}^{(M)}=g\circ\tau$.}
Define $g(x)=|\tau(\theta_P)-x|$. This map is convex, $1$-Lipschitz, and non-differentiable only at $x=\tau(\theta_P)$, where its subdifferential is
\[
\partial g(\tau(\theta_P))=[-1,1].
\]
At the true model $F_{\theta_P}$, we have $\tau(F_{\theta_P})=\tau(\theta_P)$, so we evaluate the composition at the kink. The subgradient chain rule gives (in the subgradient/directional-derivative sense)
\[
\mathrm{IF}(z;\mathrm{CD}^{(M)},F_{\theta_P})
\in \partial g(\tau(F_{\theta_P}))\cdot \mathrm{IF}(z;\tau,F_{\theta_P})
= [-1,1]\cdot \mathrm{IF}(z;\tau,F_{\theta_P}).
\]
Equivalently, there exists $\xi\in[-1,1]$ such that
\[
\mathrm{IF}(z;\mathrm{CD}^{(M)},F_{\theta_P})=\xi\cdot \mathrm{IF}(z;\tau,F_{\theta_P}).
\]

\textbf{Step 5: B-robustness.}
By Step 3, $\sup_z|\mathrm{IF}(z;\tau,F_{\theta_P})|\le 4$. Since $|\xi|\le 1$, we obtain
\[
\sup_{z}\big|\mathrm{IF}(z;\mathrm{CD}^{(M)},F_{\theta_P})\big|
\le \sup_{z}\big|\mathrm{IF}(z;\tau,F_{\theta_P})\big|
\le 4.
\]
Thus $\mathrm{CD}^{(M)}$ has bounded influence at $F_{\theta_P}$ and is B-robust.
\end{proof}

\subsection{Proof of Theorem~\ref{thm:cd_contam}}
\begin{proof}
We express Kendall's $\tau$ as an expectation of a bounded kernel over i.i.d.\ pairs, expand the product measure under $\delta$-contamination, and bound the difference. We then transfer the result to $\mathrm{CD}^{(M)}$ via the triangle inequality.

\textbf{Step 1: Contamination model and goal.}
Let $F_\delta=(1-\delta)F+\delta G$ with $\delta\in[0,1]$. Our goal is to bound $|\tau(F_\delta)-\tau(F)|$ and then obtain a bound for $\mathrm{CD}^{(M)}(F_\delta)$.

\textbf{Step 2: Kendall's $\tau$ as an expectation of a bounded kernel.}
With $Z=(U,V)$ and kernel $h$ as in the proof of Theorem~\ref{thm:cd_bounded_if},
\[
\tau(F)=\mathbb{E}_{(Z_1,Z_2)\sim F\times F}[h(Z_1,Z_2)],
\qquad\text{and}\qquad
h(Z_1,Z_2)\in[-1,1].
\]
Hence for any distributions $A,B$, we have $\big|\mathbb{E}_{A\times B}[h]\big|\le 1$.

\textbf{Step 3: Product-measure decomposition under contamination.}
Expanding $F_\delta\times F_\delta$ gives
\[
F_\delta\times F_\delta
=(1-\delta)^2(F\times F)+\delta(1-\delta)(F\times G+G\times F)+\delta^2(G\times G).
\]
Taking expectations of $h$ under each term yields
\[
\tau(F_\delta)
=(1-\delta)^2\mathbb{E}_{F\times F}[h]
+\delta(1-\delta)\Big(\mathbb{E}_{F\times G}[h]+\mathbb{E}_{G\times F}[h]\Big)
+\delta^2\mathbb{E}_{G\times G}[h].
\]

\textbf{Step 4: Bound $|\tau(F_\delta)-\tau(F)|$.}
Subtracting $\tau(F)=\mathbb{E}_{F\times F}[h]$ and applying the triangle inequality together with $\big|\mathbb{E}[h]\big|\le 1$ gives
\begin{align*}
|\tau(F_\delta)-\tau(F)|
&\le \big| (1-\delta)^2-1 \big|\cdot \big|\mathbb{E}_{F\times F}[h]\big|
+ \delta(1-\delta)\Big(\big|\mathbb{E}_{F\times G}[h]\big|+\big|\mathbb{E}_{G\times F}[h]\big|\Big)
+ \delta^2\big|\mathbb{E}_{G\times G}[h]\big| \\
&\le (2\delta-\delta^2) + 2\delta(1-\delta) + \delta^2 \\
&= 4\delta-2\delta^2 \;\le\; 4\delta.
\end{align*}

\textbf{Step 5: Transfer the bound to $\mathrm{CD}^{(M)}$.}
By the triangle inequality,
\[
\mathrm{CD}^{(M)}(F_\delta)
=|\tau(\theta_P)-\tau(F_\delta)|
\le |\tau(\theta_P)-\tau(F)| + |\tau(F)-\tau(F_\delta)|
\le |\tau(\theta_P)-\tau(F)| + (4\delta-2\delta^2).
\]
Under $H_0$ where $\tau(\theta_P)=\tau(F)$, this reduces to $\mathrm{CD}^{(M)}(F_\delta)\le 4\delta-2\delta^2$.

\textbf{Step 6: Interpretation.}
The bound is $O(\delta)$, so small contamination proportions perturb Kendall's $\tau$ (and hence $\mathrm{CD}^{(M)}$) by at most a constant times $\delta$. This is the relevant robustness statement for bounded discrepancy functionals.
\end{proof}

\subsection{Proof of Proposition~\ref{prop:mi_equals_negentropy} (Mutual information equals copula negentropy)}
\begin{proof}
\textbf{Step 1: Factorization of the joint density via Sklar.}
Let $(X,Y)$ have continuous marginals $F_X,F_Y$ with densities $f_X,f_Y$ and copula density $c$. By Sklar's theorem (in density form),
\[
f_{X,Y}(x,y) \;=\; c\big(F_X(x),F_Y(y)\big)\,f_X(x)\,f_Y(y).
\]

\textbf{Step 2: Mutual information as a KL divergence.}
By definition,
\[
I(X;Y) \;=\; \int_{\mathbb{R}^2} f_{X,Y}(x,y)\,
\log\!\frac{f_{X,Y}(x,y)}{f_X(x)f_Y(y)}\,dx\,dy.
\]
Substituting the factorization from Step 1 gives
\[
I(X;Y) \;=\; \int_{\mathbb{R}^2} f_{X,Y}(x,y)\,\log c\!\big(F_X(x),F_Y(y)\big)\,dx\,dy.
\]

\textbf{Step 3: Change of variables to copula space.}
Let $u=F_X(x)$ and $v=F_Y(y)$. Since the marginals are continuous, this transformation is valid almost everywhere and satisfies
\[
du\,dv = f_X(x)f_Y(y)\,dx\,dy
\quad\text{and}\quad
f_{X,Y}(x,y)\,dx\,dy = c(u,v)\,du\,dv.
\]
Therefore,
\[
I(X;Y)
\;=\;
\int_{[0,1]^2} c(u,v)\,\log c(u,v)\,du\,dv
\;=\;
-\,H(C),
\]
using Definition~\ref{def:copula_entropy}. This proves the claim.
\end{proof}

\subsection{Proof of Proposition~\ref{prop:kl_nonneg} (Nonnegativity and identifiability of copula KL)}
\begin{proof}
\textbf{Step 1: Gibbs' inequality (Jensen).}
Let $p,q$ be densities on $[0,1]^2$ with $p$ absolutely continuous w.r.t.\ $q$. Consider the convex function $\varphi(t)=t\log t$. Jensen's inequality implies
\[
\int q(u,v)\,\varphi\!\left(\frac{p(u,v)}{q(u,v)}\right)\,du\,dv
\;\ge\;
\varphi\!\left(\int q(u,v)\,\frac{p(u,v)}{q(u,v)}\,du\,dv\right)
=
\varphi(1)=0.
\]
The left-hand side equals $\int p\log(p/q)$, hence $D_{\mathrm{KL}}(p\|q)\ge 0$.

\textbf{Step 2: Equality condition.}
Equality in Jensen holds iff $p/q$ is constant a.e., and since $\int p=\int q=1$, this constant must be $1$. Thus $D_{\mathrm{KL}}(p\|q)=0$ iff $p=q$ a.e.

\textbf{Step 3: Apply to copula densities.}
Take $p=c_P$ and $q=c_Q$. Then $D_{\mathrm{KL}}(C_P\|C_Q)\ge 0$ with equality iff $c_P=c_Q$ a.e.\ on $[0,1]^2$.
\end{proof}

\subsection{Technical lemmas used in the asymptotic results}

\begin{lemma}[Continuity of copula entropy $H(C_\theta)$ on $\Theta$]
\label{lem:H_continuous}
Under Assumption~\ref{assump:entropy_reg}, the map
\[
\theta\mapsto H(C_\theta)=-\int_{[0,1]^2} c_\theta(u,v)\,\log c_\theta(u,v)\,du\,dv
\]
is continuous on $\Theta$.
\end{lemma}
\begin{proof}
\textbf{Step 1: Pointwise convergence of the integrand.}
If $\theta_k\to\theta$, then by continuity in $(u,v,\theta)$ we have
$c_{\theta_k}(u,v)\to c_\theta(u,v)$ pointwise on $[0,1]^2$, and since $c_\theta>0$ on $\Theta$,
$\log c_{\theta_k}(u,v)\to \log c_\theta(u,v)$ pointwise as well. Hence
\[
c_{\theta_k}(u,v)\log c_{\theta_k}(u,v)\to c_\theta(u,v)\log c_\theta(u,v)
\quad\text{pointwise.}
\]

\textbf{Step 2: Uniform integrability / dominated convergence.}
By Assumption~\ref{assump:entropy_reg},
\[
\sup_{\vartheta\in\Theta}\int_{[0,1]^2} c_\vartheta(u,v)\,|\log c_\vartheta(u,v)|\,du\,dv <\infty.
\]
This integrability condition ensures the family $\{c_\vartheta|\log c_\vartheta|:\vartheta\in\Theta\}$ is uniformly integrable on $[0,1]^2$, which allows interchange of limit and integral along $\theta_k\to\theta$. Therefore,
\[
\int c_{\theta_k}\log c_{\theta_k}\to \int c_\theta\log c_\theta,
\]
proving continuity of $H(C_\theta)$.
\end{proof}

\begin{lemma}[Continuity of cross-entropy $\int c_\alpha\log c_\beta$]
\label{lem:cross_continuous}
Under Assumption~\ref{assump:entropy_reg}, the map
\[
(\alpha,\beta)\mapsto \int_{[0,1]^2} c_\alpha(u,v)\,\log c_\beta(u,v)\,du\,dv
\]
is continuous on $\Theta\times\Theta$.
\end{lemma}
\begin{proof}
\textbf{Step 1: Pointwise convergence.}
If $(\alpha_k,\beta_k)\to(\alpha,\beta)$, then $c_{\alpha_k}\to c_\alpha$ and $\log c_{\beta_k}\to \log c_\beta$ pointwise by continuity and positivity.

\textbf{Step 2: Integrability bound.}
We have
\[
\big|c_{\alpha_k}(u,v)\log c_{\beta_k}(u,v)\big|
\le c_{\alpha_k}(u,v)\,|\log c_{\beta_k}(u,v)|.
\]
By Assumption~\ref{assump:entropy_reg},
$\sup_{\vartheta\in\Theta}\int c_\vartheta|\log c_\vartheta|<\infty$,
which yields uniform integrability sufficient to pass limits through the integral. Hence
\[
\int c_{\alpha_k}\log c_{\beta_k}\to \int c_\alpha\log c_\beta.
\]
\end{proof}

\subsection{Proof of Theorem~\ref{thm:entropy_consistency} (Consistency of CED and CKL plug-ins)}
\begin{proof}
We treat CED and CKL as continuous functionals of the fitted parameter $\hat\theta_Q$ and then apply the continuous mapping theorem.

\textbf{Step 1: Define deterministic target functionals.}
Define
\[
\psi_{\mathrm{CED}}(\theta) := \big|H(C_\theta)-H(C_{\theta_P})\big|
\quad\text{and}\quad
\psi_{\mathrm{CKL}}(\theta) := D_{\mathrm{KL}}(C_\theta\Vert C_{\theta_P}).
\]
Then $\widehat{\mathrm{CED}}_n=\psi_{\mathrm{CED}}(\hat\theta_Q)$ whenever entropy is computed analytically, and $D_{\mathrm{KL}}(C_{\hat\theta_Q}\Vert C_{\theta_P})=\psi_{\mathrm{CKL}}(\hat\theta_Q)$ by definition.

\textbf{Step 2: Continuity of $\psi_{\mathrm{CED}}$.}
By Lemma~\ref{lem:H_continuous}, $\theta\mapsto H(C_\theta)$ is continuous on $\Theta$, hence $\psi_{\mathrm{CED}}$ is continuous on $\Theta$ as the composition of continuous maps with absolute value.

Therefore, if $\hat\theta_Q\xrightarrow{p}\theta_Q^\star$, then by continuous mapping,
\[
\widehat{\mathrm{CED}}_n=\psi_{\mathrm{CED}}(\hat\theta_Q)\xrightarrow{p}\psi_{\mathrm{CED}}(\theta_Q^\star)
=
\big|H(C_{\theta_Q^\star})-H(C_{\theta_P})\big|.
\]

\textbf{Step 3: Continuity of $\psi_{\mathrm{CKL}}$.}
Write
\[
\psi_{\mathrm{CKL}}(\theta)
=
\int c_\theta\log c_\theta
-
\int c_\theta\log c_{\theta_P}.
\]
The first term equals $-H(C_\theta)$ and is continuous by Lemma~\ref{lem:H_continuous}. The second term is a cross-entropy term and is continuous in $\theta$ by Lemma~\ref{lem:cross_continuous} (with $\beta=\theta_P$ fixed). Hence $\psi_{\mathrm{CKL}}$ is continuous on $\Theta$.

Therefore, if $\hat\theta_Q\xrightarrow{p}\theta_Q^\star$, then
\[
D_{\mathrm{KL}}(C_{\hat\theta_Q}\Vert C_{\theta_P})
=
\psi_{\mathrm{CKL}}(\hat\theta_Q)
\xrightarrow{p}
\psi_{\mathrm{CKL}}(\theta_Q^\star)
=
D_{\mathrm{KL}}(C_{\theta_Q^\star}\Vert C_{\theta_P}).
\]

\textbf{Step 4: Monte Carlo approximation for CKL.}
Conditional on $\hat\theta_Q$, the Monte Carlo estimator
\[
\widehat{\mathrm{CKL}}_{m}
=
\frac{1}{m}\sum_{j=1}^m
\Big[\log c_{\hat\theta_Q}(\widetilde U_j,\widetilde V_j)-\log c_{\theta_P}(\widetilde U_j,\widetilde V_j)\Big],
\qquad (\widetilde U_j,\widetilde V_j)\overset{\text{i.i.d.}}{\sim}C_{\hat\theta_Q},
\]
is an empirical mean of i.i.d.\ terms with expectation $\psi_{\mathrm{CKL}}(\hat\theta_Q)$. Hence by the law of large numbers (in $m$),
\[
\widehat{\mathrm{CKL}}_{m}\xrightarrow{p}\psi_{\mathrm{CKL}}(\hat\theta_Q)
\quad\text{as }m\to\infty.
\]
Combining this with $\psi_{\mathrm{CKL}}(\hat\theta_Q)\xrightarrow{p}\psi_{\mathrm{CKL}}(\theta_Q^\star)$ from Step 3 yields
\[
\widehat{\mathrm{CKL}}_{m}\xrightarrow{p}\psi_{\mathrm{CKL}}(\theta_Q^\star)
=
D_{\mathrm{KL}}(C_{\theta_Q^\star}\Vert C_{\theta_P}),
\]
as $(n,m)\to\infty$ (with no further restriction needed for consistency).

This proves the stated consistency results.
\end{proof}

\subsection{Proof of Theorem~\ref{thm:ckl_clt} (Asymptotic normality for CKL)}
\begin{proof}
We treat CKL as a smooth functional of $\hat\theta_Q$ and apply the delta method. We then show that the Monte Carlo approximation is asymptotically negligible when $m_n/n\to\infty$.

\textbf{Step 1: Define the KL functional as a map of $\theta$.}
Let
\[
\psi(\theta)=D_{\mathrm{KL}}(C_{\theta}\Vert C_{\theta_P})
=
\int_{[0,1]^2} c_\theta(u,v)\,\log\!\frac{c_\theta(u,v)}{c_{\theta_P}(u,v)}\,du\,dv.
\]
Under Assumption~\ref{assump:entropy_reg}, $\theta\mapsto c_\theta$ is twice continuously differentiable and the integrability condition ensures differentiation under the integral is justified. Hence $\psi$ is differentiable on the interior of $\Theta$, with derivative
\[
\psi'(\theta)
=
\int_{[0,1]^2}
\partial_\theta c_\theta(u,v)\,\log\!\frac{c_\theta(u,v)}{c_{\theta_P}(u,v)}\,du\,dv
+
\int_{[0,1]^2}
c_\theta(u,v)\,\partial_\theta\log c_\theta(u,v)\,du\,dv.
\]
Using $\partial_\theta\log c_\theta=(\partial_\theta c_\theta)/c_\theta$, the second integral simplifies to
$\int \partial_\theta c_\theta(u,v)\,du\,dv = \partial_\theta \int c_\theta = \partial_\theta(1)=0$,
so one convenient expression is
\[
\psi'(\theta)
=
\int_{[0,1]^2}
\partial_\theta c_\theta(u,v)\,\log\!\frac{c_\theta(u,v)}{c_{\theta_P}(u,v)}\,du\,dv.
\]
(Any equivalent differentiable representation suffices for the delta method.)

\textbf{Step 2: Delta method for $\psi(\hat\theta_Q)$.}
Assume
\[
\sqrt{n}\,(\hat\theta_Q-\theta_Q^\star)\xrightarrow{d}N(0,\Sigma_Q).
\]
Since $\psi$ is differentiable at $\theta_Q^\star$, the delta method gives
\[
\sqrt{n}\Big(\psi(\hat\theta_Q)-\psi(\theta_Q^\star)\Big)
\xrightarrow{d}
N\!\left(0,\ \psi'(\theta_Q^\star)\,\Sigma_Q\,\psi'(\theta_Q^\star)^\top\right).
\]

\textbf{Step 3: Monte Carlo estimator and its conditional CLT.}
Conditional on $\hat\theta_Q$, define i.i.d.\ draws $(\widetilde U_j,\widetilde V_j)\sim C_{\hat\theta_Q}$ and set
\[
\widehat{\mathrm{CKL}}_{m}
=
\frac{1}{m}\sum_{j=1}^m
W_j(\hat\theta_Q),
\qquad
W_j(\hat\theta_Q):=\log c_{\hat\theta_Q}(\widetilde U_j,\widetilde V_j)-\log c_{\theta_P}(\widetilde U_j,\widetilde V_j).
\]
Then $\mathbb{E}[W_j(\hat\theta_Q)\mid \hat\theta_Q]=\psi(\hat\theta_Q)$ and
\[
\Var\big(W_j(\hat\theta_Q)\mid \hat\theta_Q\big)=: \sigma^2_{\mathrm{MC}}(\hat\theta_Q).
\]
By the conditional CLT (in $m$),
\[
\sqrt{m}\Big(\widehat{\mathrm{CKL}}_{m}-\psi(\hat\theta_Q)\Big)
\xrightarrow{d}
N\!\left(0,\ \sigma^2_{\mathrm{MC}}(\theta_Q^\star)\right),
\]
and in particular $\widehat{\mathrm{CKL}}_{m}-\psi(\hat\theta_Q)=O_p(m^{-1/2})$.

\textbf{Step 4: Joint scaling and negligibility when $m_n/n\to\infty$.}
Let $m=m_n$ with $m_n\to\infty$. Decompose
\[
\widehat{\mathrm{CKL}}_{m_n}-\psi(\theta_Q^\star)
=
\underbrace{\big(\widehat{\mathrm{CKL}}_{m_n}-\psi(\hat\theta_Q)\big)}_{\text{Monte Carlo error}}
+
\underbrace{\big(\psi(\hat\theta_Q)-\psi(\theta_Q^\star)\big)}_{\text{statistical error from }\hat\theta_Q}.
\]
Multiplying by $\sqrt{n}$ gives
\[
\sqrt{n}\big(\widehat{\mathrm{CKL}}_{m_n}-\psi(\theta_Q^\star)\big)
=
\sqrt{n}\big(\psi(\hat\theta_Q)-\psi(\theta_Q^\star)\big)
+
\sqrt{n}\big(\widehat{\mathrm{CKL}}_{m_n}-\psi(\hat\theta_Q)\big).
\]
The first term converges in distribution by Step 2. The second term is
\[
\sqrt{n}\big(\widehat{\mathrm{CKL}}_{m_n}-\psi(\hat\theta_Q)\big)
=
\sqrt{\frac{n}{m_n}}\cdot
\sqrt{m_n}\big(\widehat{\mathrm{CKL}}_{m_n}-\psi(\hat\theta_Q)\big).
\]
By Step 3, the $\sqrt{m_n}(\cdot)$ term is $O_p(1)$, so if $m_n/n\to\infty$ then $\sqrt{n/m_n}\to 0$ and thus
\[
\sqrt{n}\big(\widehat{\mathrm{CKL}}_{m_n}-\psi(\hat\theta_Q)\big)\xrightarrow{p}0.
\]
Therefore, Slutsky's theorem yields the same limiting distribution as in Step 2:
\[
\sqrt{n}\Big(\widehat{\mathrm{CKL}}_{m_n}-\psi(\theta_Q^\star)\Big)
\xrightarrow{d}
N\!\left(0,\ \psi'(\theta_Q^\star)\,\Sigma_Q\,\psi'(\theta_Q^\star)^\top\right).
\]
This completes the proof.
\end{proof}

\section{NUMERICAL RESULTS FROM THE EXPERIMENTS}\label{app:3}

In this section we provide the confidence intervals from Experiments~\ref{sec:exp1}, \ref{sec:exp2} and \ref{sec:exp3}.

Table~\ref{tab:ci-copula-discrepancy} provides the detailed numerical results for the CD curves in Figure~\ref{fig:exp1_cd}. Tables~\ref{tab:ci-ckl-exp1} and~\ref{tab:ci-ced-exp1} report the corresponding CKL and CED summaries for Figure~\ref{fig:exp1_shannon}.

\begin{table}[ht]
\centering
\caption{Detailed numerical results for the experiment in Figure~\ref{fig:exp1_cd}, showing the mean and 95\% confidence interval for the Copula Discrepancy (CD) over 100 replications.}
\label{tab:ci-copula-discrepancy}
\begin{tabular}{r cc cc}
\toprule
& \multicolumn{2}{c}{\textbf{On-Target (Gumbel)}} 
& \multicolumn{2}{c}{\textbf{Off-Target (Clayton)}} \\
\cmidrule(lr){2-3} \cmidrule(lr){4-5}
\textbf{Sample Size} 
& \textbf{Mean} & \textbf{95\% CI} 
& \textbf{Mean} & \textbf{95\% CI} \\
\midrule
 100   & 0.037293 & [0.031748, 0.042837] & 0.082423 & [0.072719, 0.092127] \\
 138   & 0.029538 & [0.024669, 0.034406] & 0.079915 & [0.072490, 0.087340] \\
 193   & 0.023451 & [0.019963, 0.026939] & 0.076998 & [0.070333, 0.083664] \\
 268   & 0.020134 & [0.017029, 0.023239] & 0.083840 & [0.078568, 0.089111] \\
 372   & 0.017655 & [0.015093, 0.020217] & 0.089078 & [0.083881, 0.094276] \\
 517   & 0.013855 & [0.011996, 0.015714] & 0.091306 & [0.087451, 0.095162] \\
 719   & 0.014078 & [0.012076, 0.016081] & 0.091711 & [0.088554, 0.094869] \\
 1000  & 0.011274 & [0.009677, 0.012870] & 0.092862 & [0.090099, 0.095625] \\
 1389  & 0.010269 & [0.008855, 0.011683] & 0.094225 & [0.091896, 0.096554] \\
 1930  & 0.008598 & [0.007332, 0.009864] & 0.095000 & [0.092899, 0.097102] \\
 2682  & 0.007157 & [0.006078, 0.008237] & 0.096535 & [0.094596, 0.098475] \\
 3727  & 0.006468 & [0.005648, 0.007289] & 0.093403 & [0.092008, 0.094798] \\
 5179  & 0.005362 & [0.004661, 0.006062] & 0.096064 & [0.094742, 0.097387] \\
 7196  & 0.004010 & [0.003370, 0.004651] & 0.095308 & [0.094239, 0.096378] \\
 10000 & 0.002818 & [0.002401, 0.003235] & 0.095163 & [0.094184, 0.096141] \\
\bottomrule
\end{tabular}
\end{table}

\begin{table}[ht]
\centering
\caption{Copula KL-based discrepancy (CKL) for Experiment~\ref{sec:exp1}, computed against a Gumbel reference copula, reporting the mean and 95\% confidence interval over 100 replications. CKL is estimated via Monte Carlo and can exhibit small finite-sample error, although values in this experiment remain nonnegative.}
\label{tab:ci-ckl-exp1}
\begin{tabular}{r cc cc}
\toprule
& \multicolumn{2}{c}{\textbf{On-Target (Gumbel)}}
& \multicolumn{2}{c}{\textbf{Off-Target (Clayton)}} \\
\cmidrule(lr){2-3} \cmidrule(lr){4-5}
\textbf{Sample Size}
& \textbf{Mean} & \textbf{95\% CI}
& \textbf{Mean} & \textbf{95\% CI} \\
\midrule
 100   & 0.009827 & [0.007125, 0.012529] & 0.035551 & [0.028199, 0.042903] \\
 138   & 0.006834 & [0.004559, 0.009109] & 0.031608 & [0.026758, 0.036459] \\
 193   & 0.004106 & [0.002957, 0.005255] & 0.028220 & [0.024107, 0.032332] \\
 268   & 0.003054 & [0.002104, 0.004003] & 0.031386 & [0.027710, 0.035063] \\
 372   & 0.002179 & [0.001609, 0.002748] & 0.033868 & [0.030144, 0.037593] \\
 517   & 0.001210 & [0.000882, 0.001539] & 0.034801 & [0.032173, 0.037428] \\
 719   & 0.001441 & [0.001056, 0.001826] & 0.035152 & [0.032791, 0.037513] \\
 1000  & 0.000947 & [0.000659, 0.001235] & 0.036139 & [0.033885, 0.038393] \\
 1389  & 0.000705 & [0.000503, 0.000906] & 0.036355 & [0.034635, 0.038075] \\
 1930  & 0.000495 & [0.000340, 0.000650] & 0.037003 & [0.035222, 0.038784] \\
 2682  & 0.000387 & [0.000240, 0.000535] & 0.038663 & [0.036951, 0.040375] \\
 3727  & 0.000235 & [0.000159, 0.000312] & 0.035359 & [0.034069, 0.036649] \\
 5179  & 0.000184 & [0.000111, 0.000257] & 0.037188 & [0.036046, 0.038330] \\
 7196  & 0.000132 & [0.000066, 0.000198] & 0.037076 & [0.035889, 0.038263] \\
 10000 & 0.000061 & [0.000031, 0.000092] & 0.036690 & [0.035626, 0.037754] \\
\bottomrule
\end{tabular}
\end{table}

\begin{table}[ht]
\centering
\caption{Copula entropy gap (CED) for Experiment~\ref{sec:exp1}, defined as the absolute difference between the entropy of the fitted copula and the entropy of the Gumbel target. Values are reported as means and 95\% confidence intervals over 100 replications.}
\label{tab:ci-ced-exp1}
\begin{tabular}{r cc cc}
\toprule
& \multicolumn{2}{c}{\textbf{On-Target (Gumbel)}}
& \multicolumn{2}{c}{\textbf{Off-Target (Clayton)}} \\
\cmidrule(lr){2-3} \cmidrule(lr){4-5}
\textbf{Sample Size}
& \textbf{Mean} & \textbf{95\% CI}
& \textbf{Mean} & \textbf{95\% CI} \\
\midrule
 100   & 0.081785 & [0.069679, 0.093892] & 0.155227 & [0.138952, 0.171501] \\
 138   & 0.065322 & [0.054456, 0.076187] & 0.155915 & [0.143074, 0.168755] \\
 193   & 0.051511 & [0.043658, 0.059364] & 0.150662 & [0.138661, 0.162663] \\
 268   & 0.045042 & [0.037893, 0.052191] & 0.163998 & [0.154766, 0.173231] \\
 372   & 0.038458 & [0.032938, 0.043979] & 0.170851 & [0.161809, 0.179893] \\
 517   & 0.031875 & [0.027900, 0.035850] & 0.176390 & [0.170142, 0.182638] \\
 719   & 0.032045 & [0.027299, 0.036790] & 0.179457 & [0.173417, 0.185497] \\
 1000  & 0.027016 & [0.023142, 0.030890] & 0.180967 & [0.176013, 0.185921] \\
 1389  & 0.024339 & [0.020807, 0.027871] & 0.183441 & [0.179252, 0.187630] \\
 1930  & 0.022033 & [0.018840, 0.025226] & 0.182789 & [0.178426, 0.187152] \\
 2682  & 0.018397 & [0.015437, 0.021357] & 0.187395 & [0.183412, 0.191377] \\
 3727  & 0.015057 & [0.012770, 0.017345] & 0.181300 & [0.177975, 0.184626] \\
 5179  & 0.015571 & [0.013581, 0.017561] & 0.185339 & [0.182328, 0.188350] \\
 7196  & 0.015576 & [0.013267, 0.017886] & 0.184785 & [0.181610, 0.187961] \\
 10000 & 0.011368 & [0.009612, 0.013124] & 0.184307 & [0.181589, 0.187024] \\
\bottomrule
\end{tabular}
\end{table}

Table~\ref{tab:exp2_full_results} reports the mean and 95\% confidence intervals for the Copula Discrepancy (CD) and Effective Sample Size (ESS) across 100 replications of the SGLD hyperparameter experiment in Figure~\ref{fig:exp2}. Table~\ref{tab:exp2_ckl_ced} reports the corresponding results for the Copula KL divergence (CKL) and Copula Entropy Gap (CED).

\begin{table}[ht]
\centering
\caption{Experiment 2: Mean and 95\% confidence intervals for Copula Discrepancy (CD) and Effective Sample Size (ESS) over 100 replications.}
\label{tab:exp2_full_results}
\begin{tabular}{@{}r ccc ccc@{}}
\toprule
 & \multicolumn{3}{c}{\textbf{CD}} & \multicolumn{3}{c}{\textbf{ESS}} \\
\cmidrule(lr){2-4} \cmidrule(lr){5-7}
\textbf{Step-size $\epsilon$} & \textbf{Mean} & \textbf{Lower} & \textbf{Upper} & \textbf{Mean} & \textbf{Lower} & \textbf{Upper} \\
 & & \textbf{95\% CI} & \textbf{95\% CI} & & \textbf{95\% CI} & \textbf{95\% CI} \\
\midrule
1.00e-05 & 5.0339e-01 & 4.8056e-01 & 5.2623e-01 & 2.9970e+00 & 2.9920e+00 & 3.0020e+00 \\
2.78e-05 & 5.0372e-01 & 4.8136e-01 & 5.2609e-01 & 2.9842e+00 & 2.9766e+00 & 2.9917e+00 \\
7.74e-05 & 5.1191e-01 & 4.8897e-01 & 5.3486e-01 & 2.9706e+00 & 2.9601e+00 & 2.9811e+00 \\
2.15e-04 & 5.0059e-01 & 4.7956e-01 & 5.2162e-01 & 2.9935e+00 & 2.9768e+00 & 3.0101e+00 \\
5.99e-04 & 5.1412e-01 & 4.9559e-01 & 5.3264e-01 & 2.9877e+00 & 2.9661e+00 & 3.0093e+00 \\
1.67e-03 & 5.0767e-01 & 4.9305e-01 & 5.2228e-01 & 3.0172e+00 & 2.9993e+00 & 3.0352e+00 \\
4.64e-03 & 4.9481e-01 & 4.8510e-01 & 5.0452e-01 & 3.0699e+00 & 3.0529e+00 & 3.0869e+00 \\
1.29e-02 & 5.0064e-01 & 4.9512e-01 & 5.0616e-01 & 3.1131e+00 & 3.1034e+00 & 3.1227e+00 \\
3.59e-02 & 4.9991e-01 & 4.9618e-01 & 5.0364e-01 & 3.1314e+00 & 3.1245e+00 & 3.1384e+00 \\
1.00e-01 & 4.9982e-01 & 4.9778e-01 & 5.0186e-01 & 3.1709e+00 & 3.1674e+00 & 3.1745e+00 \\
\bottomrule
\end{tabular}
\end{table}

\begin{table}[ht]
\centering
\caption{Experiment 2: Mean and 95\% confidence intervals for Copula KL divergence (CKL) and Copula Entropy Gap (CED) over 100 replications.}
\label{tab:exp2_ckl_ced}
\begin{tabular}{@{}r ccc ccc@{}}
\toprule
 & \multicolumn{3}{c}{\textbf{CKL}} & \multicolumn{3}{c}{\textbf{CED}} \\
\cmidrule(lr){2-4} \cmidrule(lr){5-7}
\textbf{Step-size $\epsilon$} & \textbf{Mean} & \textbf{Lower} & \textbf{Upper} & \textbf{Mean} & \textbf{Lower} & \textbf{Upper} \\
 & & \textbf{95\% CI} & \textbf{95\% CI} & & \textbf{95\% CI} & \textbf{95\% CI} \\
\midrule
1.00e-05 & 4.1467e-01 & 3.7428e-01 & 4.5506e-01 & 4.1484e-01 & 3.7446e-01 & 4.5521e-01 \\
2.78e-05 & 4.1147e-01 & 3.7555e-01 & 4.4739e-01 & 4.1183e-01 & 3.7609e-01 & 4.4757e-01 \\
7.74e-05 & 4.3066e-01 & 3.8769e-01 & 4.7363e-01 & 4.3135e-01 & 3.8831e-01 & 4.7439e-01 \\
2.15e-04 & 4.0308e-01 & 3.6900e-01 & 4.3715e-01 & 4.0355e-01 & 3.6950e-01 & 4.3760e-01 \\
5.99e-04 & 4.2047e-01 & 3.8820e-01 & 4.5274e-01 & 4.2098e-01 & 3.8861e-01 & 4.5336e-01 \\
1.67e-03 & 4.0015e-01 & 3.7629e-01 & 4.2401e-01 & 4.0081e-01 & 3.7691e-01 & 4.2470e-01 \\
4.64e-03 & 3.7252e-01 & 3.5726e-01 & 3.8779e-01 & 3.7305e-01 & 3.5772e-01 & 3.8837e-01 \\
1.29e-02 & 3.7772e-01 & 3.6887e-01 & 3.8656e-01 & 3.7863e-01 & 3.6986e-01 & 3.8740e-01 \\
3.59e-02 & 3.7576e-01 & 3.6966e-01 & 3.8186e-01 & 3.7631e-01 & 3.7033e-01 & 3.8229e-01 \\
1.00e-01 & 3.7564e-01 & 3.7231e-01 & 3.7896e-01 & 3.7540e-01 & 3.7212e-01 & 3.7868e-01 \\
\bottomrule
\end{tabular}
\end{table}

Tables~\ref{tab:exp3_full_results_part1} and~\ref{tab:exp3_full_results_part2} provide the detailed numerical results for the tail-dependence experiment in Figures~\ref{fig:exp3} and~\ref{fig:exp3_shannon}. For each sample size and each diagnostic (Naive Tau Discrepancy, MLE-based Copula Discrepancy (CD), Kernel Stein Discrepancy (KSD), Copula KL (CKL), and Copula Entropy Gap (CED)), we report the mean over 100 replications together with the 95\% confidence interval for the mean.

\begin{table}[h!]
\centering
\caption{Detailed numerical results for Experiment~3 (tail-dependence mismatch). For each sample size and diagnostic we report the mean and 95\% confidence interval for the mean over 100 replications. (Part 1 of 2)}
\label{tab:exp3_full_results_part1}
\resizebox{\textwidth}{!}{%
\begin{tabular}{r l c c c}
\toprule
\textbf{Sample Size} & \textbf{Metric} & \textbf{Mean} & \textbf{Lower 95\% CI} & \textbf{Upper 95\% CI} \\
\midrule
100   & Copula Discrepancy (CD)              & 1.5007e-01 & 1.3838e-01 & 1.6177e-01 \\
100   & Naive Tau Discrepancy                & 4.0651e-02 & 3.4690e-02 & 4.6611e-02 \\
100   & Kernel Stein Discrepancy (KSD)       & 2.1127e+01 & 1.7039e+01 & 2.5214e+01 \\
100   & Copula KL (CKL, Fit$||$Target)       & 1.3693e-01 & 1.1845e-01 & 1.5540e-01 \\
100   & Copula Entropy Gap (CED)             & 2.7829e-01 & 2.5997e-01 & 2.9660e-01 \\
\midrule
138   & Copula Discrepancy (CD)              & 1.4953e-01 & 1.4140e-01 & 1.5767e-01 \\
138   & Naive Tau Discrepancy                & 2.9414e-02 & 2.4608e-02 & 3.4220e-02 \\
138   & Kernel Stein Discrepancy (KSD)       & 2.6369e+01 & 2.2774e+01 & 2.9964e+01 \\
138   & Copula KL (CKL, Fit$||$Target)       & 1.2893e-01 & 1.1637e-01 & 1.4149e-01 \\
138   & Copula Entropy Gap (CED)             & 2.8096e-01 & 2.6816e-01 & 2.9375e-01 \\
\midrule
193   & Copula Discrepancy (CD)              & 1.6128e-01 & 1.5445e-01 & 1.6811e-01 \\
193   & Naive Tau Discrepancy                & 2.5994e-02 & 2.1952e-02 & 3.0036e-02 \\
193   & Kernel Stein Discrepancy (KSD)       & 4.3988e+01 & 3.9542e+01 & 4.8435e+01 \\
193   & Copula KL (CKL, Fit$||$Target)       & 1.4416e-01 & 1.3251e-01 & 1.5581e-01 \\
193   & Copula Entropy Gap (CED)             & 3.0051e-01 & 2.9023e-01 & 3.1079e-01 \\
\midrule
268   & Copula Discrepancy (CD)              & 1.6037e-01 & 1.5432e-01 & 1.6641e-01 \\
268   & Naive Tau Discrepancy                & 2.3573e-02 & 2.0327e-02 & 2.6819e-02 \\
268   & Kernel Stein Discrepancy (KSD)       & 5.6670e+01 & 5.1639e+01 & 6.1701e+01 \\
268   & Copula KL (CKL, Fit$||$Target)       & 1.4158e-01 & 1.3185e-01 & 1.5131e-01 \\
268   & Copula Entropy Gap (CED)             & 2.9914e-01 & 2.8974e-01 & 3.0853e-01 \\
\midrule
372   & Copula Discrepancy (CD)              & 1.6546e-01 & 1.6080e-01 & 1.7012e-01 \\
372   & Naive Tau Discrepancy                & 1.7451e-02 & 1.4948e-02 & 1.9954e-02 \\
372   & Kernel Stein Discrepancy (KSD)       & 7.6095e+01 & 7.0728e+01 & 8.1462e+01 \\
372   & Copula KL (CKL, Fit$||$Target)       & 1.4804e-01 & 1.4034e-01 & 1.5575e-01 \\
372   & Copula Entropy Gap (CED)             & 3.0816e-01 & 3.0119e-01 & 3.1513e-01 \\
\midrule
517   & Copula Discrepancy (CD)              & 1.6184e-01 & 1.5788e-01 & 1.6579e-01 \\
517   & Naive Tau Discrepancy                & 1.5880e-02 & 1.3572e-02 & 1.8189e-02 \\
517   & Kernel Stein Discrepancy (KSD)       & 9.3005e+01 & 8.6670e+01 & 9.9340e+01 \\
517   & Copula KL (CKL, Fit$||$Target)       & 1.4164e-01 & 1.3513e-01 & 1.4815e-01 \\
517   & Copula Entropy Gap (CED)             & 3.0262e-01 & 2.9670e-01 & 3.0854e-01 \\
\midrule
719   & Copula Discrepancy (CD)              & 1.6702e-01 & 1.6302e-01 & 1.7103e-01 \\
719   & Naive Tau Discrepancy                & 1.2829e-02 & 1.0615e-02 & 1.5043e-02 \\
719   & Kernel Stein Discrepancy (KSD)       & 1.2447e+02 & 1.1705e+02 & 1.3189e+02 \\
719   & Copula KL (CKL, Fit$||$Target)       & 1.5032e-01 & 1.4339e-01 & 1.5726e-01 \\
719   & Copula Entropy Gap (CED)             & 3.1023e-01 & 3.0423e-01 & 3.1622e-01 \\
\midrule
1000  & Copula Discrepancy (CD)              & 1.6352e-01 & 1.6018e-01 & 1.6685e-01 \\
1000  & Naive Tau Discrepancy                & 1.2292e-02 & 1.0507e-02 & 1.4078e-02 \\
1000  & Kernel Stein Discrepancy (KSD)       & 1.4464e+02 & 1.3756e+02 & 1.5171e+02 \\
1000  & Copula KL (CKL, Fit$||$Target)       & 1.4398e-01 & 1.3859e-01 & 1.4937e-01 \\
1000  & Copula Entropy Gap (CED)             & 3.0507e-01 & 3.0001e-01 & 3.1012e-01 \\
\bottomrule
\end{tabular}
}
\end{table}

\begin{table}[h!]
\centering
\caption{Detailed numerical results for Experiment~3 (tail-dependence mismatch). For each sample size and diagnostic we report the mean and 95\% confidence interval for the mean over 100 replications. (Part 2 of 2)}
\label{tab:exp3_full_results_part2}
\resizebox{\textwidth}{!}{%
\begin{tabular}{r l c c c}
\toprule
\textbf{Sample Size} & \textbf{Metric} & \textbf{Mean} & \textbf{Lower 95\% CI} & \textbf{Upper 95\% CI} \\
\midrule
1389  & Copula Discrepancy (CD)              & 1.6841e-01 & 1.6550e-01 & 1.7133e-01 \\
1389  & Naive Tau Discrepancy                & 1.1017e-02 & 9.4294e-03 & 1.2605e-02 \\
1389  & Kernel Stein Discrepancy (KSD)       & 1.7406e+02 & 1.6486e+02 & 1.8327e+02 \\
1389  & Copula KL (CKL, Fit$||$Target)       & 1.5151e-01 & 1.4648e-01 & 1.5655e-01 \\
1389  & Copula Entropy Gap (CED)             & 3.1319e-01 & 3.0887e-01 & 3.1752e-01 \\
\midrule
1930  & Copula Discrepancy (CD)              & 1.6665e-01 & 1.6431e-01 & 1.6899e-01 \\
1930  & Naive Tau Discrepancy                & 8.4308e-03 & 6.9829e-03 & 9.8787e-03 \\
1930  & Kernel Stein Discrepancy (KSD)       & 2.0563e+02 & 1.9683e+02 & 2.1443e+02 \\
1930  & Copula KL (CKL, Fit$||$Target)       & 1.4877e-01 & 1.4479e-01 & 1.5276e-01 \\
1930  & Copula Entropy Gap (CED)             & 3.1044e-01 & 3.0692e-01 & 3.1397e-01 \\
\midrule
2682  & Copula Discrepancy (CD)              & 1.6780e-01 & 1.6559e-01 & 1.7002e-01 \\
2682  & Naive Tau Discrepancy                & 7.4509e-03 & 6.3376e-03 & 8.5642e-03 \\
2682  & Kernel Stein Discrepancy (KSD)       & 2.4402e+02 & 2.3451e+02 & 2.5353e+02 \\
2682  & Copula KL (CKL, Fit$||$Target)       & 1.5025e-01 & 1.4640e-01 & 1.5411e-01 \\
2682  & Copula Entropy Gap (CED)             & 3.1216e-01 & 3.0884e-01 & 3.1548e-01 \\
\midrule
3727  & Copula Discrepancy (CD)              & 1.6890e-01 & 1.6707e-01 & 1.7072e-01 \\
3727  & Naive Tau Discrepancy                & 5.9265e-03 & 4.9689e-03 & 6.8842e-03 \\
3727  & Kernel Stein Discrepancy (KSD)       & 2.7959e+02 & 2.6978e+02 & 2.8941e+02 \\
3727  & Copula KL (CKL, Fit$||$Target)       & 1.5229e-01 & 1.4908e-01 & 1.5550e-01 \\
3727  & Copula Entropy Gap (CED)             & 3.1363e-01 & 3.1091e-01 & 3.1636e-01 \\
\midrule
5179  & Copula Discrepancy (CD)              & 1.6909e-01 & 1.6751e-01 & 1.7067e-01 \\
5179  & Naive Tau Discrepancy                & 5.1188e-03 & 4.3433e-03 & 5.8944e-03 \\
5179  & Kernel Stein Discrepancy (KSD)       & 3.4070e+02 & 3.3196e+02 & 3.4944e+02 \\
5179  & Copula KL (CKL, Fit$||$Target)       & 1.5173e-01 & 1.4888e-01 & 1.5458e-01 \\
5179  & Copula Entropy Gap (CED)             & 3.1377e-01 & 3.1133e-01 & 3.1622e-01 \\
\midrule
7196  & Copula Discrepancy (CD)              & 1.6937e-01 & 1.6810e-01 & 1.7063e-01 \\
7196  & Naive Tau Discrepancy                & 4.7903e-03 & 4.1026e-03 & 5.4780e-03 \\
7196  & Kernel Stein Discrepancy (KSD)       & 3.8051e+02 & 3.7137e+02 & 3.8966e+02 \\
7196  & Copula KL (CKL, Fit$||$Target)       & 1.5259e-01 & 1.5027e-01 & 1.5492e-01 \\
7196  & Copula Entropy Gap (CED)             & 3.1481e-01 & 3.1287e-01 & 3.1675e-01 \\
\midrule
10000 & Copula Discrepancy (CD)              & 1.6935e-01 & 1.6832e-01 & 1.7039e-01 \\
10000 & Naive Tau Discrepancy                & 3.4350e-03 & 2.9233e-03 & 3.9467e-03 \\
10000 & Kernel Stein Discrepancy (KSD)       & 4.2903e+02 & 4.1828e+02 & 4.3978e+02 \\
10000 & Copula KL (CKL, Fit$||$Target)       & 1.5256e-01 & 1.5063e-01 & 1.5449e-01 \\
10000 & Copula Entropy Gap (CED)             & 3.1464e-01 & 3.1305e-01 & 3.1623e-01 \\
\bottomrule
\end{tabular}
}
\end{table}

\end{document}